\newtheorem{prop}{\textbf{Proposition}}
\newtheorem{lemma}{\textbf{Lemma}}
\begin{document}
\newcommand{\tabincell}[2]{\begin{minipage}[#1][3.6em][c]{0.75\columnwidth}{#2}\end{minipage}}
\newcommand{\mylinebreak}{\vspace{0.16cm} \\}
\newcommand{\mean}{\mathrm{mean}}
\newcommand{\diag}[1]{\mathrm{diag}(#1)}

\title{Active Lighting Recurrence by Parallel Lighting Analogy for Fine-Grained Change Detection}

\author{Qian~Zhang$^*$,~\IEEEmembership{Student Member,~IEEE,}
       Wei~Feng$^{*\dagger}$,~\IEEEmembership{Member,~IEEE,}
       Liang~Wan,~\IEEEmembership{Member,~IEEE,}
       Fei-Peng~Tian,~\IEEEmembership{Student Member,~IEEE,}
       Xiaowei Wang,
       Ping~Tan,~\IEEEmembership{Member,~IEEE}
\IEEEcompsocitemizethanks{
	\IEEEcompsocthanksitem $^\dagger$W.~Feng is the corresponding author. Email: wfeng@ieee.org.
	\IEEEcompsocthanksitem $^*$Q.~Zhang \& W.~Feng are the joint first authors, who contribute equally to this work.  
	\IEEEcompsocthanksitem Q.~Zhang, W.~Feng and F.-P.~Tian are with the School of Computer Science and Technology, the College of Intelligence and Computing, Tianjin University, TianJin, 300350, China, and the Key Research Center for Surface Monitoring and Analysis of Cultural Relics (SMARC), State Administration of Cultural Heritage, China.
	\IEEEcompsocthanksitem L.~Wan is with the School of Computer Software, the College of Intelligence and Computing, Tianjin University, TianJin, 300350, China, and Tianjin Key Lab for Advanced Signal Processing, Civil Aviation University of China.
	\IEEEcompsocthanksitem X.~Wang is with Dunhuang Research Academy, Dunhuang, Gansu, China.
	\IEEEcompsocthanksitem P.~Tan is with the School of Computing Science, Simon Fraser University, Canada.}
}


\IEEEcompsoctitleabstractindextext{
\begin{abstract}
This paper studies a new problem, namely active lighting recurrence (ALR) that physically relocalizes a light source to reproduce the lighting condition from single reference image for a same scene, which may suffer from fine-grained changes during twice observations. ALR is of great importance for fine-grained visual inspection and change detection, because some phenomena or minute changes can only be clearly observed under particular lighting conditions. Therefore, effective ALR should be able to online navigate a light source toward the target pose, which is challenging due to the complexity and diversity of real-world lighting and imaging processes. To this end, we propose to use the simple parallel lighting as an analogy model and based on Lambertian law to compose an instant navigation ball for this purpose. We theoretically prove the feasibility, i.e., equivalence and convergence, of this ALR approach for realistic near point light source and small near surface light source. Besides, we also theoretically prove the invariance of our ALR approach to the ambiguity of normal and lighting decomposition. The effectiveness and superiority of the proposed approach have been verified by both extensive quantitative experiments and challenging real-world tasks on fine-grained change detection of cultural heritages. We also validate the generality of our approach to non-Lambertian scenes.
\end{abstract}

\begin{IEEEkeywords}
Active lighting recurrence (ALR), Relighting, Fine-grained change detection (FGCD), Photometric stereo, Cultural heritage, Preventive conservation
\end{IEEEkeywords}
}

\maketitle

\IEEEdisplaynotcompsoctitleabstractindextext
\IEEEpeerreviewmaketitle

\section{Introduction}

\IEEEPARstart{L}{ighting} recurrence (LR) plays an important role in many computer vision applications, such as accurate surface and material acquisition~\cite{RL:ARFAHF2000,RL:CZHCSS2000,RL:OP2011}, cultural heritage imaging~\cite{RTI:PTM01,RTI:TAERIICLOUUNL11} and scene or image change surveillance~\cite{fcd:feng2015fine,Learn2fgcd}. Despite the variances and diversity of previous successful methods on this topic, including relighting~\cite{RL:RDLTG2015,RL:IRRFMCPDI2017}, reflectance transformation imaging (RTI)~\cite{RTI:NLCIRSR15,RTI:TAERIICLOUUNL11} and photometric stereo (PS)~\cite{PS:ABDEUPS16,PS:UPSSOUPCIB16}, most of them focus on \emph{synthetic (or virtual) lighting recurrence} (SLR), which passively take multi-illumination images as input to \emph{virtually} synthesize the target lighting condition.

In contrast, in this paper, we study \emph{active lighting recurrence} (ALR), a new problem aiming to \emph{physically} reproduce the target lighting condition, i.e., \emph{physically} relocalize the light source to exact the same pose of the reference one. This problem is originally motivated by an important and challenging real-world task, i.e., fine-grained change monitoring and measurement of cultural heritages for preventive conservation~\cite{pc_ref:HPPC13}. Generally, preventive conservation requires multi-observations for the same scene and massive-volume monitoring to analyze the causes of the deterioration of cultural heritages. Specifically, the major challenges of practical ALR are three-fold.
\begin{enumerate}
	\item \textbf{High accuracy requirement.} To support preventive conservation, we need to detect and measure very fine-grained changes occurred on cultural heritages, e.g., ancient murals, from complex image contents and various crumply and flaky deterioration patterns. To this end, physically reproducing camera pose and lighting condition as accurately as possible is critical to the quality of fine-grained change detection~\cite{our2018ACR, our2018ALR}.
	\item \textbf{Instant navigation requirement.} ALR is a dynamic process. The purpose of ALR is to produce reliable online navigation guidance and relocalize the light source to the target pose by a robotic platform or purely by hand. Note, many valuable cultural heritages inhabit unrestricted environments or limited working spaces, so instant navigation response is necessary for ALR problem, especially when the robotic platform cannot be used under some harsh working environments.
	\item \textbf{Accuracy vs. instantaneity.} Simple lighting models cannot accurately simulate reality but is usually very fast. In contrast, sophisticated lighting models usually need expensive optimization and a large number of images to guarantee the accuracy. That is, both of them cannot satisfy the requirements of instant navigation response, massive-volume and multi-observations monitoring. Hence, ALR must solve this challenge caused by accuracy and instantaneity requirements.
\end{enumerate}

Unfortunately, to the best of our knowledge, compared to relatively mature active camera relocalization (ACR)~\cite{dcr:feng2016,our2018ACR,ACRmobile,ACRdepth}, ALR is rarely studied, especially for fine-grained change detection (FGCD)~\cite{fcd:feng2015fine}. There is no mature solution in the literature solving all the above three challenges. First, the existing SLR methods mainly focus on the visual quality of relighted images and the strict physical correctness cannot be guaranteed, which may inevitably lead to lighting recurrence errors. Second, since most SLR methods usually need hundreds of multi-illumination images~\cite{RL:RDLTG2015,RTI:PTM01}, sophisticated lighting models and expensive optimizations~\cite{RL:RDLTG2015,PS:LSS2015} to pursue the recurrence accuracy, they cannot satisfy the requirements of real-time navigation response. See Fig.~\ref{fig:motif}(a), in real-world FGCD tasks, we usually get the reference observation by casting a particular near side lighting to highlight the rich 3D microstructures of the object. Fig.~\ref{fig:motif}(b) shows the observation by carefully and manually aligning lighting. Fig.~\ref{fig:motif}(c) shows the lighting recurrence image generated by a relighting method~\cite{RTI:TAERIICLOUUNL11} using 80 multi-illumination images. The 10-times magnified absolute differences between twice observations are shown in the bottom-right corner. Fig.~\ref{fig:motif}(e)--(i) show the local microstructures and the corresponding FGCD results~\protect\cite{fcd:feng2015fine}. We can see that without or inaccurate LR may generate two types of FGCD errors, that is, rich or particular 3D microstructures caused \emph{false alarms}, and different shading caused \emph{missing}, see Fig.~\ref{fig:motif}(e)--(h) and Fig.~\ref{fig:motif}(i), respectively.\footnote{The rich 3D microstructures of the object under different near side lightings have different shadows, shadings and specular spots. All of them can cause great false alarmed changes. In contrast, some real changes cannot be clearly observed under the lighting whose direction is close to the normal of the surface where the changes occur.} Both of them significantly harm the FGCD accuracy. Besides, the relighting method~\cite{RTI:TAERIICLOUUNL11} needs at least 30 minutes (including the time of images capturing, lighting calibration, lighting calculation and re-rendering) for once lighting recurrence, which clearly cannot satisfy the massive-volume and multi-observations monitoring requirements.

In this paper, we propose an effective ALR method for reproducing the lighting condition of a reference image. We use the simple parallel lighting (PL) as an analogy model to calculate the reference and current lighting conditions from a small amount of in-situ captured side lighting images (a dozen). Then we compose a navigation ball with two spherical isointensity circles (SICs) indicating the reference and current poses to generate reliable and real-time ALR navigation guidance. Theoretically, We find that the \emph{analogy parallel lighting} (apl) based ALR (ALR-apl) is equivalent to the one based on more sophisticated lighting models, e.g., near point lighting (NPL) model and small near surface lighting (sNSL) model. 
Besides, we also prove the invariance of the proposed ALR method to the normal \& lighting decomposition ambiguity. As shown in Fig.~\ref{fig:motif}(d)(e)--(i) and our extensive experiments, the proposed ALR works well for both Lambertian and non-Lambertian scenes, and can significantly improve FGCD performance. Our ALR method only needs less than 3 minutes for once lighting recurrence (including the time of in-situ image capturing, navigation generation and light source adjustment). 
Part of our findings and results have been published in~\cite{our2018ALR}.

\begin{figure}[t]
	\begin{center}
		\includegraphics[width=1\linewidth]{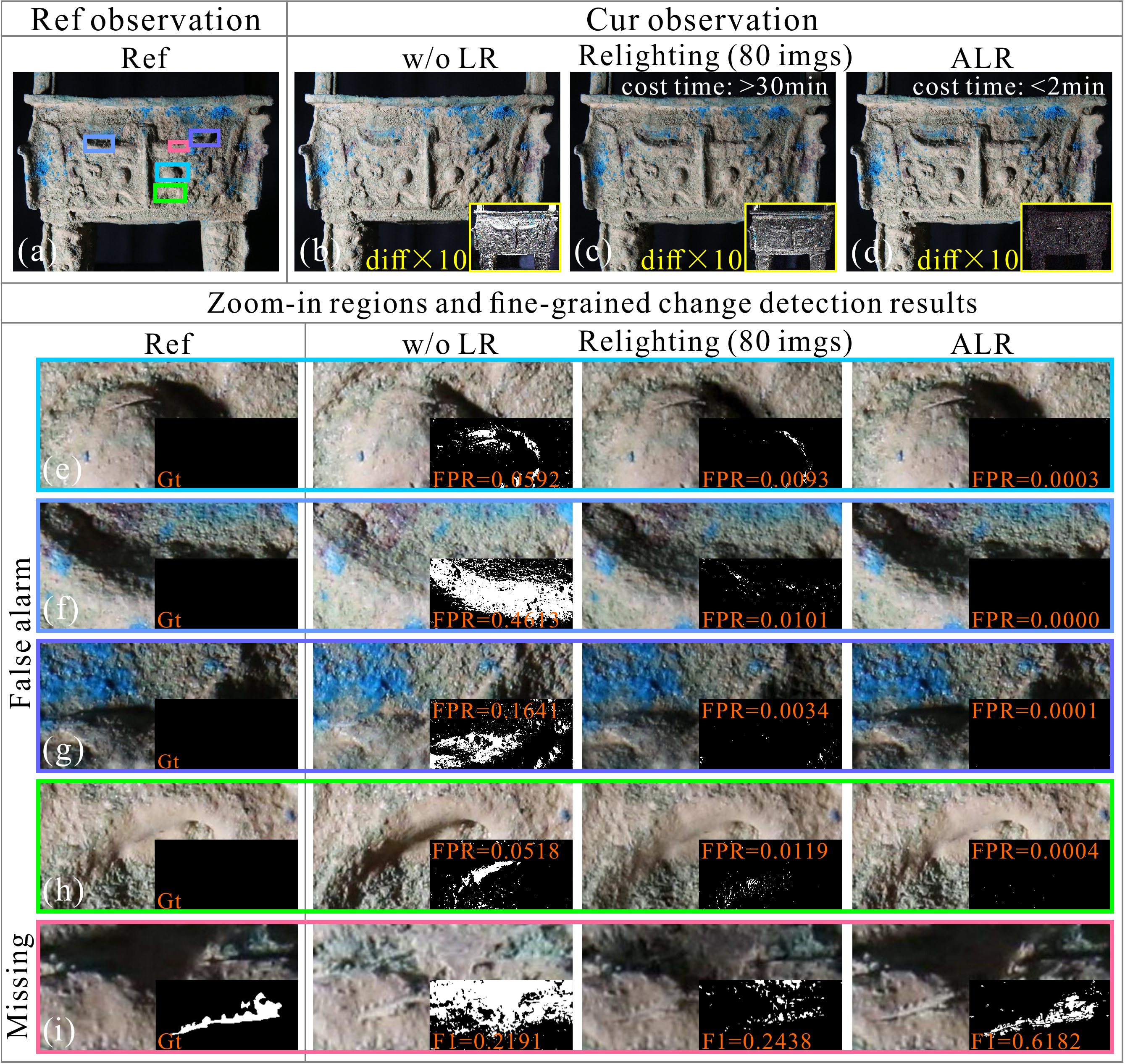} \vspace{-20pt}
	\end{center}
	\caption{\small{Motivation and importance of active lighting recurrence for fine-grained change detection. See text for details.}}
	\label{fig:motif} \vspace{-5pt}
\end{figure}

\section{Related Work}

\subsection{RTI and CHI}
Reflectance transformation imaging (RTI) aims to re-render a scene under arbitrary lighting direction, by sampling images under known lighting directions. RTI is widely used in cultural heritage imaging (CHI), using image-based methods to effectively capture and visualize the geometry and material of cultural heritages. Polynomial basis function (PTM)~\cite{RTI:PTM01} and hemispherical harmonics~\cite{RTI:TAERIICLOUUNL11} are two representative RTI methods, which represent the reflectance function of a scene by lower-order basis functions. A survey about computational imaging for cultural heritage can be accessed~\cite{RTI:RDSI2016}. 

\subsection{ Photometric stereo}
Photometric stereo (PS) focuses on acquiring surface shape and reflectance from multi-illumination images. Classical PS methods~\cite{PS:PMDSOMI80, PS:DSRMI80} solve reflectance and normal under the assumption of Lambertian surface and infinitely far lights. Recent PS methods focus on non-Lambertian surfaces and separately parameterize specularity and diffusion reflections~\cite{PS:ABRMRIA12, PS:PSCBRIS14, PS:ACFSUPSDM12}. For uncalibrated lighting conditions, PS needs also determine the exact lighting directions~\cite{PS:PSGUL07, PS:ACFSUPSDM12, PS:UPSSOUPCIB16}. A thorough survey on uncalibrated and non-Lambertian photometric stereo can be found in~\cite{PS:ABDEUPS16}. Besides, how to relax the far-light assumption is also widely studied in~\cite{PS:ANAPSLIRCL11, PS:PSNPLASMD15, PS:IRLSUSM14}.

\subsection{Image-based relighting} 
Image-based relighting aims to calculate the light transport matrix which stores the relation between the intensity of each pixel and different lighting conditions. Given an arbitrary lighting condition, a new image can be easily generated from the light transport matrix. Classical methods generally use a brute-force solution to measure the entries of light transport matrix~\cite{RL:ARFAHF2000,RL:ADLS2005}. An early survey about relighting can be found in~\cite{RL:ASIRT2007}. Later, for reducing the number of captured images, sparse representations of light transport matrix have been studied in many works, e.g., introducing compressive sensing~\cite{RL:CLTS2009}, dual photography~\cite{RL:CDP2009} and appropriate illumination patterns~\cite{RL:FDALTSVI2012}. Furthermore, many methods~\cite{RL:FBS2005, RL:OCFFLTA2010, RL:ASRF2007} exploit the data coherence to reconstruct the light transport matrix with fewer images. Recently, neural networks flourished in relighting fields~\cite{RL:FGIDHF2009, RL:GIRRF2013, RL:RDLTG2015, RL:IRRFMCPDI2017,ye2018single}.

\begin{figure*}[t]
	\begin{center}
		\includegraphics[width=1.0\linewidth]{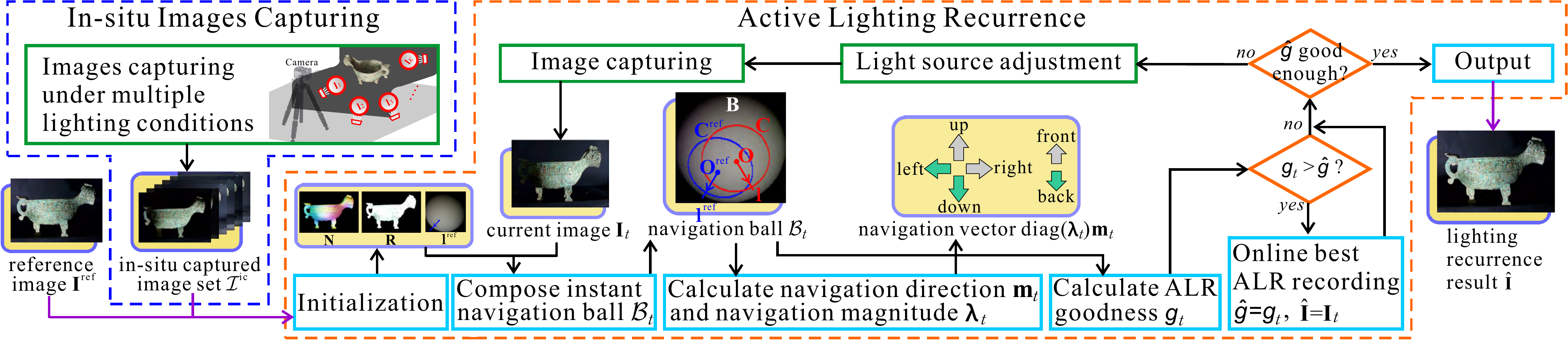} \vspace{-0.6cm}
	\end{center}
	\caption{\small {Overall framework of the proposed active lighting recurrence (ALR) approach that works well for realistic near point light source or small near surface light source. The green and light-blue blocks indicate pose-adjustment (or image-capturing) and online calculation processes, respectively. See text for details.}}
	\label{fig:framework} \vspace{-0.0cm}
\end{figure*}

\section{Problem Formulation}
Let ${\bf{I}}^{\rm{ref}},{\bf{I}} \in \mathbb{R}^{P}$ be the reference and current observations, where $P$ being the pixel number. Let $\rm{L}^{ref}$ and ${\rm{L}}$ be the reference and current lighting conditions. Then we have ${\bf{I}}^{\rm{ref}} = {\rm{F}}_{\rm{L}}( {\rm{L}}^{\rm ref}, {\mathcal{S}}_{\tau})$ and ${\bf{I}} = {\rm{F}}_{\rm{L}}( {\rm{L}}, {\mathcal{S}}_{\tau + \vartriangle \tau})$, where ${\mathcal{S}}_{\tau}$ and ${\mathcal{S}}_{\tau+\vartriangle \tau}$ denote the scene (e.g., reflectance, normal, specular regions, if any) at reference time $\tau$ and current time $\tau + \vartriangle\tau$ respectively, ${\rm{F}}_{\rm{L}}(\cdot)$ indicates the real lighting model, which is determined by scene and lighting condition. Note, the scene may occur some fine-grained changes during time interval $\vartriangle\tau$. Lighting recurrence (LR) aims to reproduce the reference lighting condition and recreate a current image which is similar (except the change region) to the reference one. We formulate the lighting condition estimation in LR problem as
\begin{equation} \label{eq:formulation_LR}
\begin{array}{rl}
{\hat {\rm L}} =\displaystyle \mathop{\arg\min}_{\rm L} \| {\rm{F}}_{\rm{L}}( {\rm{L}}, {\mathcal{S}}_{\tau + \vartriangle \tau}) - {\rm{F}}_{\rm{L}}( {\rm{L}}^{\rm ref} ,{\mathcal{S}}_{\tau}) \|_{\rm{F}}^2,
\end{array}
\end{equation}
where $\| \cdot \| _{\rm{F}}$ is the Frobenius norm. 

As we mentioned above, synthetic lighting recurrence (SLR) usually use a specific lighting model to denote ${\rm{F}}_{\rm{L}}(\cdot)$ and generate the relighted image by ${\hat{\bf I}} = {\rm{F}}_{\rm{L}}( {\hat {\rm L}}, {\mathcal{S}}_{\tau + \vartriangle \tau})$. However, SLR methods cannot guarantee strict physical correctness, which may harm the relighting performance and reduce the FGCD accuracy. Different with SLR, we focus on the new problem, i.e., active lighting recurrence (ALR), which aims to physically reproduce the lighting condition. Since lighting condition $\rm {L}$ is determined by the extrinsic parameter (pose) ${{\bm{\rho}}}$ and intrinsic parameter (radiation power, color temperature, intensity distribution) ${\bf{\Theta}}$ of the light source, let $\rm{L} = {\mathcal{L}}({\rm{\bm{\rho}}},{\bf{\Theta}})$ and $\rm{L}^{ref} = {\mathcal{L}}({\rm{\bm{\rho}}}^{ref},{\bf{\Theta}}^{ref})$, Eq.~(\ref{eq:formulation_LR}) can be rewritten as
\begin{equation} \label{eq:formulation_gALR}
\begin{array}{rl}
\langle {\hat{\rm{\bm{\rho}}}, \hat{\bf{\Theta}}} \rangle =\displaystyle \mathop{\arg\min}_{{\langle{{\rm{\bm{\rho}}}, {\bf{\Theta}}}\rangle}} & \| {\rm{F}}_{\rm{L}}( {\mathcal{L}}({\rm{\bm{\rho}}}, {\bf{\Theta}}), {\mathcal{S}}_{\tau + \vartriangle \tau})\\& - {\rm{F}}_{\rm{L}}( {\mathcal{L}}({\rm{\bm{\rho}}}^{\rm{ref}}, {\bf{\Theta}}^{\rm{ref}}) ,{\mathcal{S}}_{\tau}) \|_{\rm{F}}^2.
\end{array}
\end{equation}
Eq.~(\ref{eq:formulation_gALR}) aims to reproduce both the intrinsic and extrinsic parameters of light source. In fact, it is hard even impractical for us to make the intrinsic parameters of two different kinds of light sources be consistent. Hence, we assume the light source is the same during twice observations, i.e., ${\bf{\Theta}}^{\rm{ref}} = {\bf{\Theta}}$. Then the aim of ALR becomes to relocalize the extrinsic parameter, i.e., light source pose. Besides, since the coordinate translation between light source and camera is uncalibrated, we cannot directly move the light source to the target pose $\hat{\rm{\bm{\rho}}}$ by one-shot adjustment, even if using an accurate robotic platform. To solve this problem, we employ the \emph{progressive adjustment strategy}, which has been successfully used in active camera relocalization (ACR)~\cite{dcr:feng2016,our2018ACR}.
Hence, we formulate the ALR problem as a \emph{dynamic process} that iteratively calculates the navigation guidance and adjusts light source pose, 
\begin{equation} \label{eq:formulation_ALR}
\begin{array}{rl}
{\hat{\rm{\bm{\rho}}}} &= \displaystyle \mathop{\arg\min}_{\bm{\rho}}  \| {\rm{F}}_{\rm{L}}( {\mathcal{L}}({\rm{\bm{\rho}}} \mid {\bf{\Theta}}), {\mathcal{S}}_{\tau + \vartriangle \tau}) \\& {\ \ \ \ \ \ \ \ \ \ \ }- {\rm{F}}_{\rm{L}}( {\mathcal{L}}({\rm{\bm{\rho}}}^{\rm{ref}} \mid {\bf{\Theta}}) ,{\mathcal{S}}_{\tau}) \|_{\rm{F}}^2 \\
&=\displaystyle \mathop{\arg\min}_{{{\rm{\bm{\rho}}}}}\| {\rm{F}}_{\rm{L}}( {\mathcal{L}}({\rm{\bm{\rho}}} \mid {\bf{\Theta}}), {\mathcal{S}}_{\tau + \vartriangle \tau}) - {\bf{I}}^{\rm{ref}} \|_{\rm{F}}^2,
\end{array}
\end{equation}
\begin{equation} \label{eq:formulation_strategy}
\begin{array}{rl}
&{{\bm{\rho}}_{t+1}} = {{\bm{\rho}}_{t}} + {\rm{diag}}({\bm{\lambda}}_{t}){\bf{m}}_{t}, \\
&{\hat{\rm{\bm{\rho}}}} = {\lim \limits_{t \to \infty}} {{\bm{\rho}}_{t}},
\end{array}
\end{equation}
where ${\bf{m}}_{t}$ and ${\bm{\lambda}}_{t}$ indicate the ALR navigation direction and magnitude for $t$th light source pose adjustment, ${\rm diag}(\cdot)$ is the diagonalization of a vector. Note, Eq.~(\ref{eq:formulation_ALR}) defines the ALR target, while Eq.~(\ref{eq:formulation_strategy}) is the progressive ALR strategy, i.e., we physically adjust the current light source pose by $\Delta {\bm{\rho}}_{t} = {\rm{diag}}({\bm{\lambda}}_{t}){\bf{m}}_{t}$ in $t$th adjustment. Hence, the convergence and goodness of an ALR approach relies on the correctness of ${\bm{\lambda}}_{t}$ and ${\bf{m}}_{t}$, and $\displaystyle \lim_{t \to \infty} {\rm{diag}}({\bm{\lambda}}_{t}){\bf{m}}_{t} = \mathbf{0}$.

There exist two challenges to solve above ALR problem. First, for estimating the light source pose, we need to solve the inverse problem of the lighting model ${\rm{F}_L}(\cdot)$. Although we can use gradient descendant method to solve Eq.~(\ref{eq:formulation_ALR}) based on a sophisticated and realistic lighting model, it cannot satisfy the requirement of real-time navigation response. Second, since the scene ${\mathcal{S}}_{\tau + \vartriangle \tau}$ is unknown, the inverse problem is ill-posed and there may exists an ambiguity for scene and lighting decomposition. In this paper, we propose an \emph{analogy parallel lighting} (apl) based ALR (ALR-apl) to perfectly solve these problems and we theoretically prove our ALR-apl is equivalent to the one based on more sophisticated lighting models.



\section{ALR under Parallel Lighting}\label{sec:ALR-PL}
We use the simple parallel lighting as an analogy model. See Fig.~\ref{fig:threemodel}(a), let ${\bf{l}} \in \mathbb{R}^{3}$ be the parallel lighting vector, whose magnitude and direction indicate the lighting strength and direction, respectively. 
Then we have ${\mathcal{L}}({\rm{\bm{\rho}}} \mid {\bf{\Theta}})=\bf{l}$ and ${\mathcal{S}_{\tau + \vartriangle \tau}} = \{ \bf{R}, \bf{N} \}$, with ${\bf{R}} \in \mathbb{R}^{P}$ and ${\bf{N}} \in \mathbb{R}^{P \times 3}$ being the scene reflectance and normal. Thus, ALR problem (Eqs.~(\ref{eq:formulation_ALR})--(\ref{eq:formulation_strategy})) can be reduced to a much simpler ALR-apl problem
\begin{equation} \label{eq:ALR-PL} 
\begin{array}{rl}
{\hat{\bf{l}}} =\displaystyle \mathop{\arg\min}_{\bf{l}} \| {\bf{R}} \circ {\bf{Nl}} - {\bf{I}}^{\rm{ref}} \|_{\rm{F}}^2,
\end{array}
\end{equation}
\vspace{-0.3cm}
\begin{equation} \label{eq:ALR-PL_strategy}
\begin{array}{rl}
&{{\bf{l}}_{t+1}} = {{\bf{l}}_{t}} + {\rm{diag}}({\bm{\lambda}}_{t}){\bf{m}}_{t}, \\
&{\hat{\rm{\bf{l}}}} = {\lim \limits_{t \to \infty}} {{\bf{l}}_{t}},
\end{array}
\end{equation}
where $\circ$ indicates element-wise multiplication. 

Thanks to the simplicity of ALR-apl model, it is possible to online calculate both the navigation direction ${\bf{m}}_{t}$ and magnitude ${\bm{\lambda}}_{t}$ from the reference and current image, ${\bf{I}}^{\rm{ref}}$ and ${\bf{I}}_t$. Fig.~\ref{fig:framework} shows the working flow of the ALR-apl approach. Specifically, to get a reliable initialization, we first roughly capture $K$ different side lighting images to form the in-situ captured image set ${\mathcal{I}}^{\rm{ic}}$. From ${\mathcal{I}}^{\rm{ic}}$ and $\bf{I}^{\rm{ref}}$, we obtain scene normal ${\bf{N}}$, reflectance ${\bf{R}}$ and reference lighting vector ${\bf{l}}^{\rm{ref}}$. Then, in each ALR-apl iteration, we compose an instant navigation ball $\mathcal{B}$ and online calculate the navigation direction ${\bf{m}}_{t}$ and magnitude ${\bm{\lambda}}_{t}$ for light source adjustment. We analyze the convergence and efficacy of this process at last. 

\subsection{Initialization}
By parallel lighting analogy, we have
\begin{equation} \label{eq:S=NL}
{\bf{S}} = {\bf{N}} {\bf{l}},
\end{equation}
where ${\bf{S}} \in \mathbb{R}^{P}$ is shading image that can be obtained by disambiguated intrinsic image decomposition~\cite{CC:zqtpdqslwels12}, by solving $\bf{I} = \bf{R} \circ \bf{S}$. Clearly, from ${\mathcal{I}}^{\rm{ic}}$ and $\bf{I}^{\rm{ref}}$, we can obtain $K+1$ shading images and we use them to calculate the scene normal ${\bf{N}}$ and reference (target) lighting vector ${\bf{l}}^{\rm{ref}}$ (i.e., $\hat {\bf{l}}$ in Eq.~(\ref{eq:ALR-PL})) via a fast state-of-the-art uncalibrated photometric stereo algorithm, LDR~\cite{PS:ACFSUPSDM12}. Considering that the change occurred in scene is tiny, here we omit the influence of time interval $\vartriangle \tau$ on scene, i.e., ${\mathcal{S}}_{\tau} \approx {\mathcal{S}}_{\tau + \vartriangle \tau} = \{ \bf{R}, \bf{N} \}$ \footnote{The simplification about scene information is only for estimating reference lighting condition ${\bf{l}}^{\rm{ref}}$, since both ${\bf{I}}^{\rm{ref}}$ and ${\bf{I}}$ are all real images captured by camera, the change can be reproduced in the currently captured images.}. In practice, we only need $K \approx 12$ side lighting images for initialization.


\subsection{ Instant navigation ball $\mathcal{B}$}
Given $\bf{N}$ and $\bf{R}$, the lighting vector ${\bf{l}}_t$ corresponding to the current image ${\bf{I}}_t$ can be easily obtained by solving Eq.~(\ref{eq:S=NL}) in closed-form solution. To eliminate scene dependency in ALR process, we online render both current and reference lightings, ${\bf{l}}_t$ and ${\bf{l}}^{\rm{ref}}$, onto a unit sphere, rather than on the real scene normal $\bf{N}$. Specifically, let ${\bf{B}}^{\rm{ref}}$ and ${\bf{B}}_{t}$ be the reference and currently rendered images of the unit sphere, i.e., ${\bf{B}}^{\rm{ref}} = {\bf{N}}^{\rm{s}} {\bf{l}}^{\rm{ref}}$ and ${\bf{B}}_{t} = {\bf{N}}^{\rm{s}} {\bf{l}}_{t}$, where ${\bf{N}}^{\rm{s}}$ is the sphere normal which is known. From the Lambertian law, we can easily obtain the following proposition about the spherical isointensity sets ${\bf{C}}^{\rm{ref}}$ and ${\bf{C}}_{t}$, formed by rendered pixels with some particular intensity value, e.g., the median of ${\bf{B}}^{\rm{ref}}$.

\begin{prop}[SICs \& shading equivalence] \label{prop:ALR_apl}
	With analogy parallel lighting and Lambertian law, given an arbitrary lighting vector $\bf{l}$, the spherical isointensity set $\bf{C}$ always forms a circle under the view of lighting direction, which can be named as spherical isointensity circle~(SIC). Iff the reference and current SICs, ${\bf{C}}^{\rm{ref}}$ and ${\bf{C}}_{t}$ coincide completely, the reference and current images, ${\bf{I}}^{\rm{ref}}$ and ${\bf{I}}_{t}$, are the same.
\end{prop}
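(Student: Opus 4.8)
The plan is to reduce the geometric statement about SICs to an algebraic statement about the lighting vectors, and then to connect lighting-vector equality to image equality through the apl model ${\bf I} = {\bf R} \circ {\bf N}{\bf l}$. So the proof will have three links: (isointensity set $=$ circle), (SICs coincide $\iff$ ${\bf l}^{\rm ref}={\bf l}_t$), and (${\bf l}^{\rm ref}={\bf l}_t \iff {\bf I}^{\rm ref}={\bf I}_t$).

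First I would establish the circle claim. On the unit sphere the surface normal at a pixel equals its position ${\bf p}$ with $\|{\bf p}\|=1$, so by the Lambertian law the rendered intensity is ${\bf p}^{\!\top}{\bf l} = \|{\bf l}\|\cos\theta$, where $\theta$ is the angle between ${\bf p}$ and ${\bf l}$. Fixing the intensity to a value $c$ (e.g., the median of ${\bf B}^{\rm ref}$) forces $\cos\theta$, hence $\theta$, to be constant, so the isointensity set is exactly the intersection of the unit sphere with the plane ${\bf p}^{\!\top}{\bf l} = c$. This intersection is a planar circle whose supporting plane is orthogonal to ${\bf l}$, centred at $\frac{c}{\|{\bf l}\|^2}{\bf l}$ with radius $\sqrt{1 - c^2/\|{\bf l}\|^2}$; orthographically projecting it along ${\bf l}$ (the ``view of the lighting direction'') therefore yields a circle centred at the lighting direction. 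This proves the first assertion, and shows that the SIC is well defined and nondegenerate precisely when $|c| < \|{\bf l}\|$, which the median choice secures.

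Next I would prove the key equivalence that the two SICs coincide iff ${\bf l}^{\rm ref} = {\bf l}_t$. The direction ${\bf l}^{\rm ref}={\bf l}_t\Rightarrow{\bf C}^{\rm ref}={\bf C}_t$ is immediate, since identical lighting gives the identical rendering ${\bf B}^{\rm ref}={\bf B}_t$ and hence the same level set. For the converse I would read the lighting vector off the circle: its supporting plane recovers the direction ${\bf l}/\|{\bf l}\|$ up to sign, the sign being fixed by requiring the lit spherical cap to lie on the ${\bf l}$-side, and the signed offset $c/\|{\bf l}\|$ of the plane together with the known threshold $c$ recovers the magnitude $\|{\bf l}\|$. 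Thus a SIC with known $c$ determines ${\bf l}$ uniquely, so ${\bf C}^{\rm ref}={\bf C}_t$ forces ${\bf l}^{\rm ref}={\bf l}_t$. Transferring to the real images is then routine: under the unchanged-scene assumption ${\mathcal S}_\tau \approx {\mathcal S}_{\tau+\vartriangle\tau}=\{{\bf R},{\bf N}\}$ we have ${\bf I}^{\rm ref}={\bf R}\circ{\bf N}{\bf l}^{\rm ref}$ and ${\bf I}_t={\bf R}\circ{\bf N}{\bf l}_t$, so ${\bf l}^{\rm ref}={\bf l}_t$ trivially gives ${\bf I}^{\rm ref}={\bf I}_t$, while conversely ${\bf I}^{\rm ref}={\bf I}_t$ means ${\bf R}\circ{\bf N}({\bf l}^{\rm ref}-{\bf l}_t)={\bf 0}$. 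Chaining the two equivalences yields the stated iff.

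The main obstacle, and the only place a hypothesis is really needed, is this last converse: to conclude ${\bf l}^{\rm ref}={\bf l}_t$ from ${\bf R}\circ{\bf N}({\bf l}^{\rm ref}-{\bf l}_t)={\bf 0}$ I must assume the reflectance is entrywise nonzero and the normal matrix ${\bf N}$ has full column rank, i.e., the scene contains at least three pixels with linearly independent normals. This non-degeneracy holds for the curved or textured surfaces of interest but fails for a purely fronto-parallel planar patch, so I would state it explicitly as the genericity condition under which the equivalence is exact. A secondary technical point to dispatch carefully is the sign disambiguation of the circle's plane normal together with the constraint $|c|<\|{\bf l}\|$, both of which the median-intensity threshold guarantees.
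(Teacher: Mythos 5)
Your proof is correct and follows the same route as the paper's: establish the circle property from the Lambertian constant-angle relation ${\bf{B}}_p = {\bf{N}}^{\rm{s}}_p {\bf{l}} = \|{\bf{l}}\|\cos\alpha_p$, show that coinciding SICs force ${\bf{l}}^{\rm{ref}} = {\bf{l}}_t$, and transfer lighting equality to image equality under the unchanged-scene assumption. The two places where you differ are refinements of execution rather than a different approach. First, to recover ${\bf{l}}$ from the SIC you use the circle's supporting plane (direction from the plane normal, magnitude from the signed offset together with the known threshold $c$), whereas the paper solves the linear system ${\bf{N}}^{\rm{s}}_p {\bf{l}}^{\rm{ref}} = {\bf{N}}^{\rm{s}}_p {\bf{l}}$ over three or more circle points; these carry the same content, yours being slightly more self-contained since it avoids having to argue that three points of a non-great circle have linearly independent position vectors. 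Second, and more usefully, you state explicitly the genericity condition --- ${\bf{R}}$ entrywise nonzero and ${\bf{N}}$ of full column rank --- needed to pass from ${\bf{I}}^{\rm{ref}} = {\bf{I}}_t$ back to ${\bf{l}}^{\rm{ref}} = {\bf{l}}_t$; the paper dismisses this ``necessary condition'' as similar to the sufficient one and never names any such hypothesis, so your version is the more rigorous of the two (in the paper's pipeline the condition is implicitly required anyway, since ${\bf{l}}_t$ is obtained from ${\bf{I}}_t$ by a closed-form least-squares solve of ${\bf{S}} = {\bf{N}}{\bf{l}}$, which presupposes ${\bf{N}}^{\rm{T}}{\bf{N}}$ invertible).
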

\begin{proof} SICs: For Eq.~(\ref{eq:S=NL}), we replace $\bf{N}$ by a sphere normal $\bf{N}^{\rm{s}}$. For arbitrary point $p$, we have
	\begin{equation} \label{eq:PLM_1}
	{{\bf{B}}_p} = {\bf{N}}_p^{\rm{s}} {{\bf{l}}}={\left\| {\bf{l}} \right\|}{\rm{ cos\ }} {\alpha_{p}},
	\end{equation}
	where $\alpha_{p}$ is the included angle of ${\bf{N}}_p^{\rm{s}}$ and $\bf{l}$. Given two points $p$ and $q$ which satisfy ${{\bf{B}}_p} = {{\bf{B}}_q}$ and $\alpha_{p},\alpha_{q} \in [ 0 ,\frac{\pi}{2} ]$, we have
	\begin{equation} \label{eq:PLM_2}
	\begin{aligned}
	{\rm{cos\ }}\alpha_{p}={\rm{  cos\ }}\alpha_{q} \Leftrightarrow  \alpha_{p} = \alpha_{q}.
	\end{aligned}
	\end{equation}
	It means that if any two points on the rendered image $\bf{B}$ have the same intensity, the normals of the two points have the same angle with $\bf{l}$. In other words, the points having identical intensity value on $\bf{B}$, i.e., the spherical isointensity set $\bf{C}$ forms a circle under the view of the parallel lighting direction.
	
	Shading equivalence: We first prove the sufficient condition. For each pixel $p$ in $\bf{C}^{\rm{ref}}$ (and $\bf{C}$) on $\bf{B}^{\rm{ref}}$ (and $\bf{B}$), we have ${{\bf{B}}^{\rm{ref}}_{p}}{\rm{ = }}{\bf{N}}_{p}^{{\rm{s}}}  {\bf{l}}^{\rm{ref}}$ and ${{\bf{B}}_{p}}{\rm{ = }}{\bf{N}}_{p}^{{\rm{s}}}  {\bf{l}}$. Considering ${\bf{C}}^{\rm{ref}} = \bf{C}$, so ${\bf{B}}^{\rm{ref}}_{p}={\bf{B}}_{p}$. Therefore, only if ${\bf{C}}^{\rm{ref}}$ contains more than three spherical points, the linear system ${\bf{N}}_{p}^{{\rm{s}}}{\bf{l}}^{\rm{ref}} = {\bf{N}}_{p}^{{\rm{s}}}{\bf{l}}$ leads to ${\bf{l}}^{\rm{ref}}={\bf{l}}$. Furthermore, since scene normal and reflectance remain the same, the real captured images are fully determined by the lighting condition. Thus, we have ${\bf{I}}^{\rm{ref}}={\bf{I}}$. The proof of necessary condition is similar to the sufficient condition and we do not provide the details. 
\end{proof}

\begin{figure*}[t]
	\begin{center}
		\includegraphics[width=0.9\linewidth]{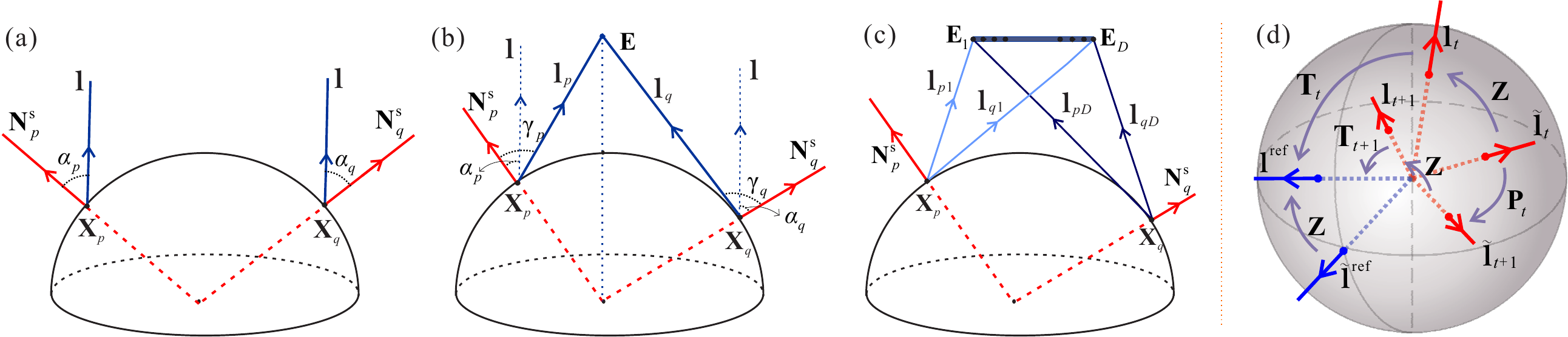}\vspace{-0.3cm}
	\end{center}
	\caption{\small { (a)~The parallel lighting model; (b)~The near point lighting model; (c)~The small near surface lighting model; (d)~The illustration of the ambiguity matrix $\bf{Z}$ of $\mathbf{N}$ \& $\mathbf{l}$ decomposition for Lemma~\ref{lemma:xy-axis}, see text for details. }}
	\label{fig:threemodel} \vspace{-0cm}
\end{figure*}

Proposition~\ref{prop:ALR_apl} means that ALR is finished if we can make the reference and current SICs, ${\bf{C}}^{\rm{ref}}$ and ${\bf{C}}_{t}$ coincide by adjusting light source pose. As shown in Fig.~{\ref{fig:balltransformation}}, we dynamically compose an \emph{instant navigation ball} $\mathcal{B}_t$ to provide effective instant ALR navigation for adjusting light source,
\begin{equation} \label{eq:PLM_navigationball}
\mathcal{B}_t = \{ {\bf{B}}_t, {\bf{C}}_t, {\bf{O}}_t, {\bf{C}}^{\rm{ref}}, {\bf{O}}^{\rm{ref}} \},
\end{equation}
where ${\bf{B}}_t$ is the current rendered image, ${\bf{O}}^{\rm{ref}}\triangleq[1,\theta^{\rm ref}, \varphi^{\rm ref}]^{\rm T}$ and ${\bf{O}}_{t}\triangleq[1,\theta_t, \varphi_t]^{\rm T}$ are the center coordinates of ${\bf{C}}^{\rm{ref}}$ and ${\bf{C}}_{t}$, respectively. Essentially, the navigation ball $\mathcal{B}_t$ provides a visual navigation guidance in $t$th iteration, which can be used to calculate the navigation direction and magnitude.

\subsection{Online calculation of navigation direction ${\bf{m}}_{t}$}

\begin{figure}[t] \vspace{-0cm}
	\begin{center}
		\includegraphics[width=1.0\linewidth]{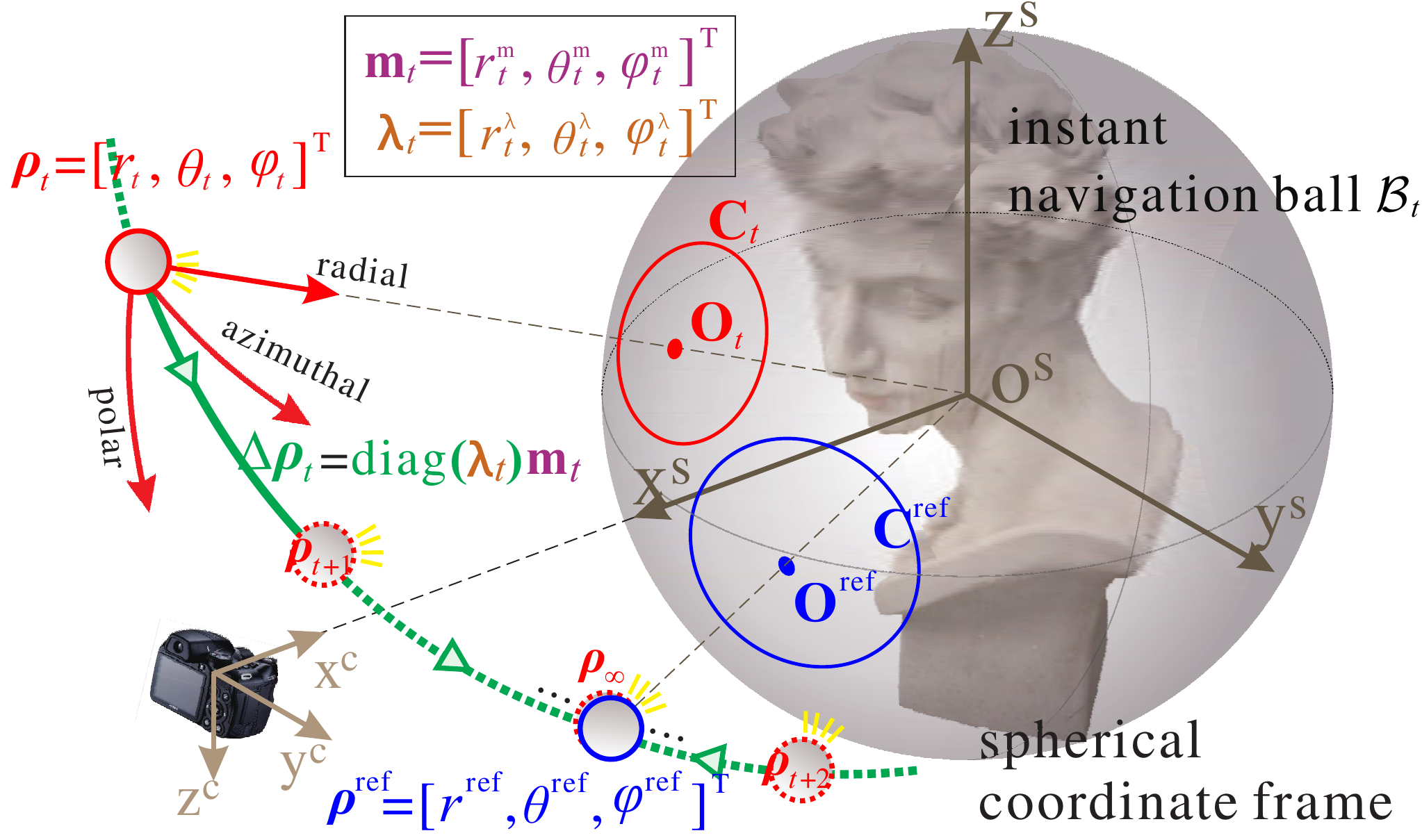}  \vspace{-0.6cm}
	\end{center}
	\caption{\small {The instant navigation ball $\mathcal{B}_t$, reference and current SICs ${\bf{C}}^{\rm{ref}}$ (blue) \& ${\bf{C}}_{t}$ (red) in the spherical coordinate frame, the parameterization of light source pose ${{\bm{\rho}}_{t}} = [r_t, \theta_t, \varphi_t]^{\mathrm{T}}$ and ALR increment $\Delta {\bm{\rho}}_{t} = {\rm{diag}}({\bm{\lambda}}_{t}){\bf{m}}_{t}$, and light source ALR adjustment trajectory (green line).  }}
	\label{fig:balltransformation} \vspace{-0cm}
\end{figure}

As shown in Fig.~{\ref{fig:balltransformation}}, since $\theta_t$ and $\varphi_t$ encode the lighting direction of light source in the azimuthal and polar axes respectively, an analogy parallel light source pose ${{\bm{\rho}}_{t}}$ can be expressed by vector $[r_t, \theta_t, \varphi_t]^{\mathrm{T}}$ in the spherical coordinate frame, with $r_t$ being the distance between the light source and the coordinate frame center. 
Therefore, we can effectively calculate the ALR navigation direction ${\bf{m}}_{t}$ in radial, azimuthal and polar axes according to $\mathcal{B}_t$,
\begin{equation} \label{eq:navcom}
{\bf{m}}_{t} = [r_t^{\rm m}, \theta_t^{\rm m}, \varphi_t^{\rm m}]^{\mathrm{T}} = {\rm{sgn}} ([A^{\rm{ref}}, \theta^{\rm{ref}}, \varphi^{\rm{ref}}]^{\mathrm{T}} - [A_t, \theta_t, \varphi_t]^{\mathrm{T}} ),
\end{equation}
where $A^{\rm{ref}} = \mathrm{A}({\bf{C}}^{\rm{ref}})$ and $A_t = \mathrm{A}({\bf{C}}_{t})$ are the areas of SICs ${\bf{C}}^{\rm{ref}}$ and ${\bf{C}}_{t}$, respectively, $\mathrm{A}(\cdot)$ denotes region area, ${\rm{sgn}}(\cdot)$ is the sign function. In fact, the area of SIC encodes the distance information between light source and coordinate frame center. As illustrated by Fig.~{\ref{fig:balltransformation}}, ${\bf{m}}_{t}$ reflects the positive ($+1$) or negative ($-1$) ALR directions respectively along the three axes of spherical coordinate, for the apl model.

\subsection{Online calculation of navigation magnitude ${\bm{\lambda}}_{t}$} 
With real-time calculation of current navigation direction ${\bf{m}}_{t}$, we can establish a manual control loop with $[A^{\rm{ref}}, \theta^{\rm{ref}}, \varphi^{\rm{ref}}]^{\mathrm{T}}$ as desired set-point (SP) and $[A_t, \theta_t, \varphi_t]^{\mathrm{T}}$ being the measured process variable (PV). For many real-world fine-grained change detection tasks, manual ALR, or ALR with hand (ALR\_H), is reliable due to its great portability. 

Nevertheless, as shown in Fig.~{\ref{fig:balltransformation}}, we can also effectively calculate the ALR navigation magnitude ${\bm{\lambda}}_{t}$ via \emph{bisection approaching}, which together with ${\bf{m}}_{t}$ can enable an \emph{automatic ALR} process, with the help of a robotic platform (ALR\_R). Specifically, given an initial ${\bm{\lambda}}_{0} = [r_0^{\rm{\lambda}}, \theta_0^{\rm{\lambda}}, \varphi_0^{\rm{\lambda}}]^{\rm{T}}$, where $r_0^{\rm{\lambda}}$, $\theta_0^{\rm{\lambda}}$ and $\varphi_0^{\rm{\lambda}}$ indicate the initial light source adjustment magnitude in radial, azimuthal and polar axes, respectively. In $t$th ($t\geq1$) ALR iteration, ${\bm{\lambda}}_{t}=[r_{t}^{\rm{\lambda}},\theta_{t}^{\rm{\lambda}},\varphi_{t}^{\rm{\lambda}}]^{\rm{T}}$ satisfies
\begin{equation} \label{eq:bisectstrategy}
\begin{aligned}
a_{t}^{\rm{\lambda}} = \left\{
\begin{array}{rcl}
\frac{1}{2} a_{t-1}^{\rm{\lambda}}      &      & {a_{t}^{\rm{m}} a_{t-1}^{\rm{m}} < 0},\\
\mu a_{t-1}^{\rm{\lambda}}    &      & \rm{otherwise},\\
\end{array} \right.
\end{aligned}
\end{equation}
where $a \in \{r, \theta, \varphi\}$ denotes the three independent spherical axes, $\mu$ is the speed-up rate of navigation magnitude and is empirically set as $1.2$ in our experiments. With Eq.~(\ref{eq:bisectstrategy}), we can efficiently obtain the ALR navigation vector, i.e., ALR increment $\Delta {\bm{\rho}}_{t} = {\rm{diag}}({\bm{\lambda}}_{t}){\bf{m}}_{t}$, which is directly applied on the robotic platform to conduct the $t$th ALR adjustment.  

\subsection {The algorithm and implementation details}

\begin{algorithm}[ht] 	
	\small
	\caption{Active Lighting Recurrence} \label{alg:practical}
	\KwIn{Normal $\bf{N}$, reflectance $\bf{R}$, reference lighting vector ${\bf{l}}^{\rm{ref}}$ and stopping threshold $\eta$}
	\KwOut{$\hat {\bf{I}}$}
	Initialization: $\hat g=0$, $g_{0}=0$, $t=0$\;
	\While{$g_t \leq \eta$}
	{
		$t++$\;
		Capture current image ${\bf{I}}_t$\;
		Calculate current lighting vector ${\bf{l}}_t$ by Eq.~(\ref{eq:S=NL})\;
		Compose navigation ball $\mathcal{B}_t$ according to Eq.~({\ref{eq:PLM_navigationball}})\;
		Calculate IoU $g_t$ by Eq.~({\ref{eq:IOU}})\;
		\If{$g_t > \hat g$}
		{
			$\hat g=g_t, ~~ {\hat {\bf{I}}}={\bf{I}}_t$\;
		}
		Calculate navigation command ${\bf{m}}_t$ by Eq.~(\ref{eq:navcom})\;
		Calculate navigation magnitude $\bm{\lambda}_t$ by Eq.~(\ref{eq:bisectstrategy})\;
		Adjust light source pose by ${\rm{diag}}({\bm{\lambda}}_t){\bf{m}}_t$;
	}
	return $\hat {\bf{I}}$.
\end{algorithm}

Algorithm~\ref{alg:practical} shows the detailed working flow of our ALR-apl approach. Specifically, in $t$th iteration, we first compose the navigation ball $\mathcal{B}_t$ according to ${\bf{l}}^{\rm{ref}}$ and ${\bf{l}}_t$, then calculate the navigation direction ${\bf{m}}_t$ and navigation magnitude $\bm{\lambda}_t$. After that, we adjust light source pose by ${\rm{diag}}({\bm{\lambda}}_t){\bf{m}}_t$. This iterative adjustment process is terminated by an ALR goodness $g_t$ that measures the recurrence accuracy by the overlap ratio, i.e., the Intersection-over-Union (IoU) of ${\bf{C}}^{\rm{ref}}$ and ${\bf{C}}_t$,
\begin{equation} \label{eq:IOU}
\begin{aligned}
g_t = \frac{\mathrm{A}({\rm{reg}}({\bf{C}}_{t}) \cap {\rm{reg}}(\bf{C}^{\rm{ref}}))}{\mathrm{A}({\rm{reg}}({\bf{C}}_{t}) \cup {\rm{reg}}(\bf{C}^{\rm{ref}}))},
\end{aligned}
\end{equation}
where ${\rm{reg}}({\bf{C}}_t)$ and ${\rm{reg}}(\bf{C}^{\rm{ref}})$ indicate the regions inclosed by ${\bf{C}}_t$ and $\bf{C}^{\rm{ref}}$ respectively, $\rm A(\cdot)$ is region area. During ALR, we record the largest $g_t$ and the corresponding current image by $\hat g$ and $\hat {\bf{I}}$, respectively. Once $g_t \leq \eta$, $\hat {\bf{I}}$ is just the final lighting recurrence result. We set $\eta$ to $0.98$ and we use the median of reference rendered image ${\bf{B}}^{\rm{ref}}$ to form ${\bf{C}}^{\rm{ref}}$ and $\bf{C}$. We empirically set $r_0^{\rm{\lambda}}=\theta_0^{\rm{\lambda}}=\varphi_0^{\rm{\lambda}}=5$mm in our experiments.

\subsection{Convergence analysis}\label{sec:ALRconvergence}

\begin{figure}[t] \vspace{-0cm}
	\begin{center}
	\includegraphics[width=1.0\linewidth]{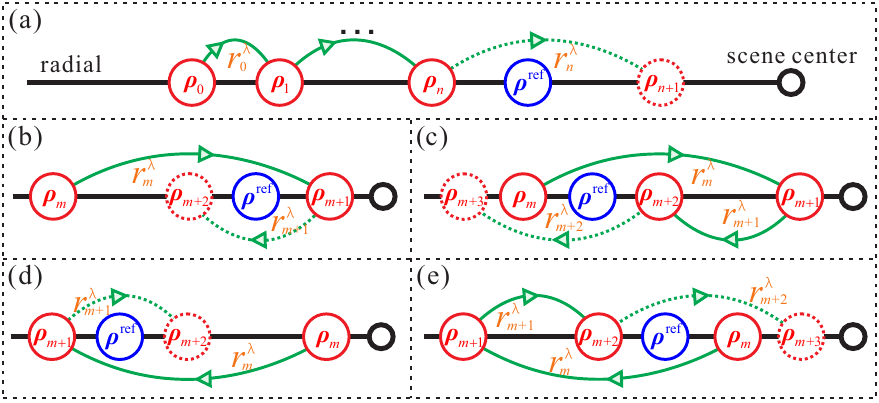}  \vspace{-0.6cm}
	\end{center}
	\caption{\small {The light source pose adjustment in radial-axis during ALR. (b) (or (c)) and (d) (or (e)) happen alternately after (a). According this observation, we can prove the convergence of pose adjustment in radial-axis. See text for details.}}
	\label{fig:ALRconvergence} \vspace{-0cm}
\end{figure}

Eq.~(\ref{eq:bisectstrategy}) indeed defines a \emph{bisection approaching strategy}. We can see that the light source pose adjustments in the three spherical axes are independent of one another. In $t$th iteration, we physically adjust the light source pose by ${\rm{diag}}({\bm{\lambda}}_{t}){\bf{m}}_{t}$, which is calculated by Eq.~(\ref{eq:navcom}) and Eq.~(\ref{eq:bisectstrategy}). After multiple adjustments, the light source finally reaches the target pose, i.e., ALR is finished. See the green curve of Fig.~{\ref{fig:balltransformation}} for an example of light source adjustment trajectory. Hence, the convergence of the ALR process relies on $\displaystyle \lim_{t \to \infty} {\rm{diag}}({\bm{\lambda}}_{t}){\bf{m}}_{t} = \mathbf{0}$.


\begin{lemma}[Convergence of ${\bm{\lambda}}_{t}$] \label{prop:ALR_convergence}
	Using the bisection approaching strategy, i.e., Eq.~(\ref{eq:bisectstrategy}), if bisection occurs infinite times, we have $\displaystyle \lim_{t \to \infty}{\bm{\lambda}}_{t} = \mathbf{0}$ if $\mu<2$. 
\end{lemma}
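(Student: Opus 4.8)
The plan is to exploit the fact, already noted in the text, that the pose adjustments along the three spherical axes $a\in\{r,\theta,\varphi\}$ are mutually independent, so it suffices to prove $\lim_{t\to\infty} a_t^{\lambda}=0$ for a single generic axis and then read off the vector statement coordinate-wise. Fixing such an axis, I would model its dynamics as a one-dimensional search for a target value $a^{\ast}$ (the reference set-point) with update $a_{t+1}=a_t+a_t^{\lambda}\,a_t^{\mathrm m}$ and direction $a_t^{\mathrm m}=\mathrm{sgn}(a^{\ast}-a_t)$; for the radial axis this uses the monotone dependence of the SIC area on the light-source distance, so that a sign change of $A^{\mathrm{ref}}-A_t$ corresponds exactly to $a_t$ crossing $a^{\ast}$. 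Under this model a bisection (halving) event at time $\tau$ is precisely a sign flip $a_\tau^{\mathrm m}a_{\tau-1}^{\mathrm m}<0$, which means the step from $a_{\tau-1}$ to $a_\tau$ jumped across $a^{\ast}$; hence the resulting overshoot $\delta_\tau=|a_\tau-a^{\ast}|$ is strictly smaller than the step that produced it, $\delta_\tau<a_{\tau-1}^{\lambda}$.

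The key step, and the crux of the whole argument, is a structural claim: between two consecutive bisection events there is at most one speed-up event. I would prove it by tracking the overshoot $\delta$ right after a halving, writing $s=a_{\tau-1}^{\lambda}$ for the crossing step, so that the new step is $s/2$. If $\delta_\tau<s/2$, then moving back toward $a^{\ast}$ by $s/2$ immediately overshoots again and triggers the next bisection with no intervening speed-up. If instead $\delta_\tau\ge s/2$, we stay on the same side (one speed-up), but since $\delta_\tau<s$ the remaining distance is below $s/2$, while the next sped-up step has size $\mu s/2>s/2$, so that single enlarged step necessarily crosses $a^{\ast}$ and forces a bisection. Either way the number of speed-ups in a cycle is $0$ or $1$, and I would check that the invariant ``overshoot $<$ crossing step size'' holds at the first bisection and is reproduced at each new one, so the argument propagates to every cycle.

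With the structural claim in hand the conclusion is a short geometric-decay estimate. Writing $s_k$ for the step size just before the $k$-th bisection, one halving followed by $n_k\in\{0,1\}$ speed-ups gives $s_{k+1}=\tfrac12\mu^{n_k}s_k\le\tfrac{\mu}{2}\,s_k$. This is exactly where the hypothesis $\mu<2$ enters: it forces the per-cycle contraction factor $\mu/2<1$, whence $s_k\le(\mu/2)^k s_0\to0$. Because the hypothesis guarantees that bisection occurs infinitely often, $k\to\infty$; and since every $a_t^{\lambda}$ equals $s_k$, $\tfrac12 s_k$, or $\tfrac{\mu}{2}s_k$ for the relevant $k$, all of which vanish as $s_k\to0$, the full sequence satisfies $a_t^{\lambda}\to0$. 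Applying this to each axis yields $\lim_{t\to\infty}\bm{\lambda}_t=\mathbf{0}$.

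I expect the main obstacle to be making the structural claim fully rigorous rather than heuristic: one must justify that ``bisection infinitely often'' together with the monotone encoding of the measured variables (the area $A_t$ on the radial axis, and $\theta_t,\varphi_t$ on the angular ones) really forces a sign flip to coincide with a genuine crossing of $a^{\ast}$, and that the overshoot-versus-step-size invariant is preserved across cycles. If the encoding were only weakly monotone or noisy near $a^{\ast}$, a sign flip would not imply a clean crossing and the bound $n_k\le1$ could fail, which would in turn break the exact $\mu<2$ threshold; I would therefore state the monotonicity of the SIC area in the light-source distance as an explicit assumption underlying the estimate.
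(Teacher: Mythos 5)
Your proposal is correct, and it follows essentially the same route as the paper's proof: reduction to a single spherical axis, the structural observation that (after the first crossing) each bisection is followed by at most one speed-up, and the resulting per-cycle contraction factor of at most $\mu/2<1$, which yields geometric decay of the step sizes. The genuine difference is one of rigor rather than of strategy: the paper simply \emph{asserts} the alternation structure by appeal to Fig.~\ref{fig:ALRconvergence} (``we have the conclusion that \ldots Fig.~\ref{fig:ALRconvergence}(b) (or (c)) and (d) (or (e)) happen alternately''), and then writes $r^{\lambda}_{t}=\mu^{n-1}(\tfrac{1}{2})^{i}(\tfrac{\mu}{2})^{j}r^{\lambda}_{0}$, whereas you actually prove the structural claim via the overshoot invariant $\delta_{\tau}<a^{\lambda}_{\tau-1}$ and the case split $\delta_{\tau}<s/2$ versus $\delta_{\tau}\ge s/2$, propagated inductively across cycles; this is exactly the missing justification in the paper's argument. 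Your analysis is also sharper on one point: the paper attributes the need for $\mu<2$ to the alternation itself (``If $\mu\ge 2$, the above alternation process may not be satisfied''), but your invariant argument shows the at-most-one-speed-up structure holds for any $\mu\ge 1$ (note you do implicitly use $\mu\ge1$ when claiming $\mu s/2>s/2$), and that $\mu<2$ is needed only to make the per-cycle factor $\mu/2$ a contraction. Finally, the monotonicity caveat you flag --- that a sign flip of $A^{\rm ref}-A_t$ must correspond to a genuine crossing of the target radial pose --- is a real assumption shared (implicitly) by the paper's proof; it is essentially discharged by Lemma~\ref{lemma:z-axis}, which shows the SIC area and $\left\|\bf{l}\right\|$ have the same magnitude relation, combined with the monotone dependence of $\left\|\bf{l}\right\|$ on the source distance in the NPL model.
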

\begin{proof} 
	We have ${\bm{\lambda}}_{t}=[r_{t}^{\rm{\lambda}},\theta_{t}^{\rm{\lambda}},\varphi_{t}^{\rm{\lambda}}]^{\rm{T}}$ and we first prove the convergence of ${r}^{\lambda}_{t}$. As shown in Fig.~\ref{fig:ALRconvergence}(a), we assume the light source has been adjusted to ${\bm{\rho}}_{n}$ along the radial-axis from initial pose ${\bm{\rho}}_{0}$, and the light source should cross the reference pose ${\bm{\rho}}^{\rm ref}$ at next iteration $n+1$. Note, the solid and dashed lines indicate the adjustments that have and have not yet occurred, respectively. Then, we have the conclusion that if $\mu<2$, Fig.~\ref{fig:ALRconvergence}(b) (or (c)) and Fig.~\ref{fig:ALRconvergence}(d) (or (e)) happen alternately. Specifically, after $n$th iteration, i.e., Fig.~\ref{fig:ALRconvergence}(a), then Fig.~\ref{fig:ALRconvergence}(b) or (c) must happen and followed by Fig.~\ref{fig:ALRconvergence}(d) or (e), and Fig.~\ref{fig:ALRconvergence}(b) or (c) happens once again after that. The alternation process lasts forever. Note, here we only consider $\mu<2$. If $\mu\ge 2$, the above alternation process may not be satisfied. For Fig.~\ref{fig:ALRconvergence}(b) or (d), we have
	\begin{equation} \label{eq:ALR_convergence_1}
	\begin{aligned}
	{r}^{\lambda}_{m+1} = \frac{1}{2}{r}^{\lambda}_{m},
	\end{aligned}
	\end{equation}
	where $m\ge n$. Similarly, for Fig.~\ref{fig:ALRconvergence}(c) or (e), we have
	\begin{equation} \label{eq:ALR_convergence_2}
	\begin{aligned}
	{r}^{\lambda}_{m+2} = \mu{r}^{\lambda}_{m+1} = \frac{\mu}{2}{r}^{\lambda}_{m},
	\end{aligned}
	\end{equation}
	where $\mu$ is the speed-up rate of navigation magnitude. Therefore, for $t$th iteration ($t>n$), we have 
	\begin{equation} \label{eq:ALR_convergence_3}
	\begin{aligned}
	{r}^{\lambda}_{t} = \mu^{n-1} {\Big(\frac{1}{2}\Big)}^{i} {\Big(\frac{\mu}{2}\Big)}^{j}
	{r}^{\lambda}_{0},
	\end{aligned}
	\end{equation}
	where $i$ and $j$ denote the occurrence times of Eq.~(\ref{eq:ALR_convergence_1}) and Eq.~(\ref{eq:ALR_convergence_2}), respectively. In fact, $t = n+i+2j$. Since $n$ is finite, when $t$ approaches infinity, $i$ and $j$ also approach infinity. Hence, we have $\displaystyle \lim_{t \to \infty}{r}^{\lambda}_{t} = 0$ if $\mu<2$. The convergence proof of ${\theta}^{\lambda}_{t}$ or ${\varphi}^{\lambda}_{t}$ is same to ${r}^{\lambda}_{t}$ and we don't provide the details here. Then, we have $\displaystyle \lim_{t \to \infty}{\bm{\lambda}}_{t} = \mathbf{0}$ when $\mu<2$.
\end{proof}
According to Lemma~{\ref{prop:ALR_convergence}}, we have $\displaystyle \lim_{t \to \infty} {\rm{diag}}({\bm{\lambda}}_{t}){\bf{m}}_{t} = \mathbf{0}$. Fig.~\ref{fig:simulation_mu} demonstrates the convergence process of the navigation vector ${\rm{diag}}({\bm{\lambda}}_{t}){\bf{m}}_{t}$ under different speed-up rate $\mu$ by simulation experiment. We set the target light source pose, initial light source pose and initial navigation magnitude to $[60,-70,80]^{\rm T}$, $[0,0,0]^{\rm T}$ and $[5,5,5]^{\rm T}$, respectively. Then we conduct ALR under different speed-up rates. From Fig.~\ref{fig:simulation_mu}, we can find that, $\mu>1$ ($\mu=1.2$, $\mu=1.5$ in Fig.~\ref{fig:simulation_mu}) may effectively help the light source to quickly approach the target and accelerate the convergence compared with $\mu=1$. Besides, Fig.~\ref{fig:simulation_mu} also verifies that ALR does not converge if $\mu\ge 2$. In fact, the selection of $\mu$ is related to many factors, e.g., scene, initial light source pose, initial navigation magnitude of ALR. Empirically, we set $\mu$ to $1.2$ in our experiments.

\begin{figure}[t]
	\begin{center}
		\includegraphics[width=1.0\linewidth]{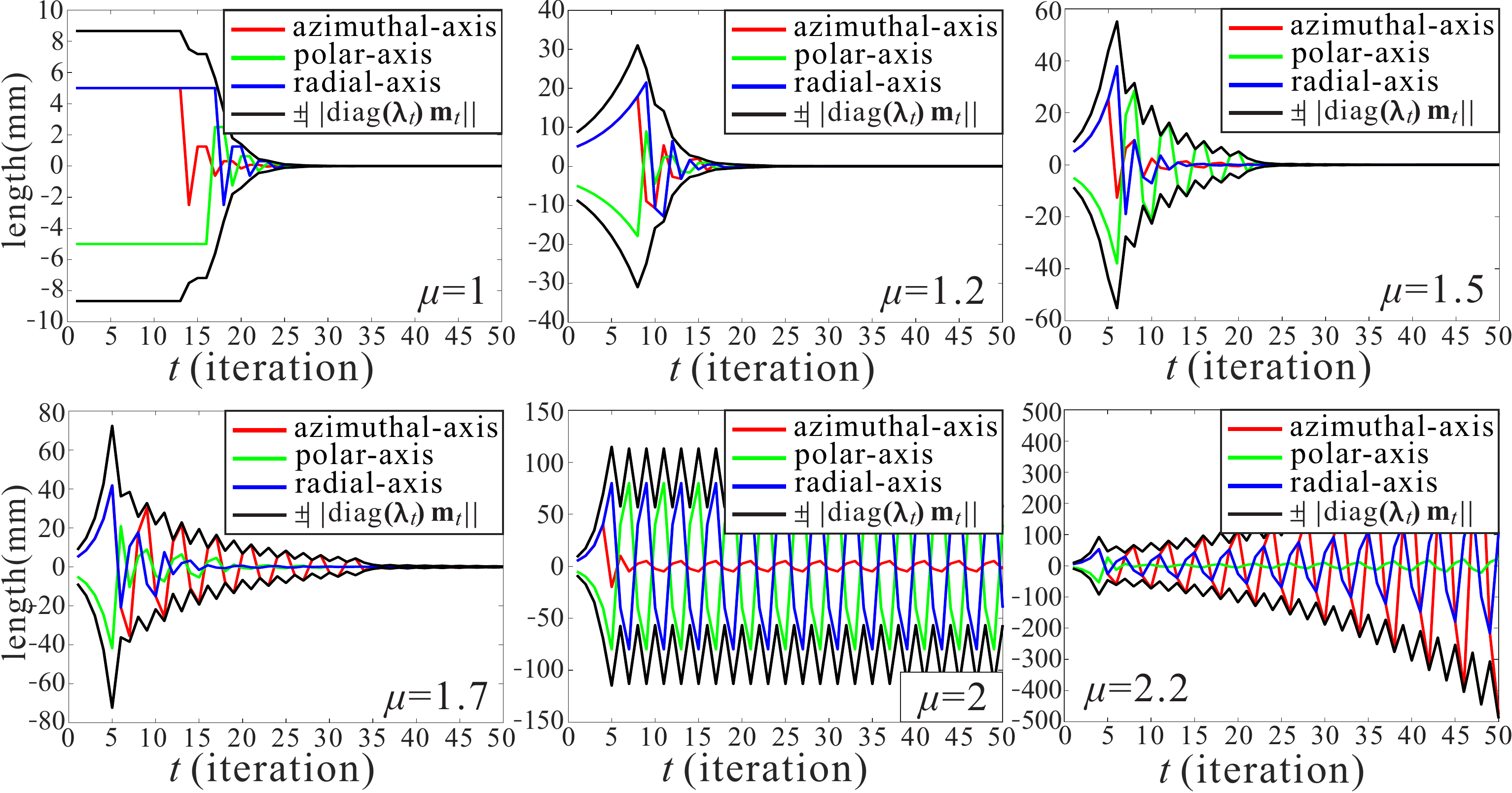} \vspace{-0.6cm}
	\end{center}
	\caption{ \small { Iterative changes of navigation vector ${\rm{diag}}({\bm{\lambda}}_{t}){\bf{m}}_{t}$ during ALR under different speed-up rate $\mu$ of navigation magnitude. See text for details. }}
	\label{fig:simulation_mu} \vspace{-0.2cm}
\end{figure}

\subsection {Invariance to $\mathbf{N}$ and $\mathbf{l}$ decomposition ambiguity} \label{sec:ambiguity}
According to Eq.~(\ref{eq:S=NL}), the $\mathbf{N}$ and $\mathbf{l}$ decomposition generally subjects to an ambiguity matrix $\bf{Z}$ satisfying ${\bf{S}} = {\bf{N}} {\bf{l}} = {\bf{\tilde N}} {\bf{Z}}^{-1} {\bf{Z}} {\bf{\tilde l}}$, where ${\bf{\tilde N}} \in \mathbb{R}^{P \times 3}$ and ${\bf{\tilde l}} \in \mathbb{R}^{3}$ are the real normal and lighting vector. \if 0, ${\bf{Z}} \in \mathbb{R}^{3 \times 3}$ is an ambiguity matrix between $\bf{N}$ and $\bf{\tilde N}$. Note, in this paper, we use superscript `$\ \tilde{}\ $' to indicate the real value of a variable.\fi Therefore, our initialization also exits an ambiguity matrix ${\bf{Z}}$ between the calculated and real lighting vectors (or scene normals), which may influence the correctness of ALR. Fortunately, we prove that if the angle difference between ${\bf{\tilde l}}$ and ${\bf{l}}$ is not larger than $\frac{\pi}{3}$, the ambiguity matrix $\bf{Z}$ does not affect the effectiveness and convergence of our ALR approach.

\begin{lemma} \label{lemma:ZisR}
	The ambiguity matrix $\bf{Z}$ generated by the decomposition of Eq.~(\ref{eq:S=NL}) is a rotation matrix.
\end{lemma}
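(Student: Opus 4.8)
The plan is to lean on the one structural property that separates a genuine normal field from an arbitrary matrix: every row of a normal map is a unit vector. Since both the recovered pair $(\mathbf{N},\mathbf{l})$ and the true pair $(\tilde{\mathbf{N}},\tilde{\mathbf{l}})$ reproduce the same shading $\mathbf{S}$ via Eq.~(\ref{eq:S=NL}), and since the decomposition couples them as $\mathbf{N}=\tilde{\mathbf{N}}\mathbf{Z}^{-1}$ and $\mathbf{l}=\mathbf{Z}\tilde{\mathbf{l}}$, I would first record the per-pixel identity $\mathbf{N}_p=\tilde{\mathbf{N}}_p\mathbf{Z}^{-1}$ together with the fact that $\|\mathbf{N}_p\|=\|\tilde{\mathbf{N}}_p\|=1$ for every pixel $p$ (both are bona fide normals). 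This converts the lemma into a purely algebraic statement about a linear map that carries a whole family of unit vectors to unit vectors.

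The core step is to show such a map must be orthogonal. Expanding $\|\mathbf{N}_p\|^2=\mathbf{N}_p\mathbf{N}_p^{\mathrm{T}}=1$ gives $\tilde{\mathbf{N}}_p\,\mathbf{M}\,\tilde{\mathbf{N}}_p^{\mathrm{T}}=0$, where $\mathbf{M}\triangleq\mathbf{Z}^{-1}(\mathbf{Z}^{-1})^{\mathrm{T}}-\mathbf{I}$ is symmetric. Each pixel thus imposes one linear constraint on the six independent entries of $\mathbf{M}$. Because a real photometric-stereo scene supplies normals spread over a two-dimensional patch of the sphere — far more than six directions in general position, and not confined to any degenerate quadric — the only symmetric matrix annihilated simultaneously by all of them is $\mathbf{M}=\mathbf{0}$. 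Hence $\mathbf{Z}^{-1}(\mathbf{Z}^{-1})^{\mathrm{T}}=\mathbf{I}$, and inverting yields $\mathbf{Z}^{\mathrm{T}}\mathbf{Z}=\mathbf{I}$, so $\mathbf{Z}$ is orthogonal.

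Finally, to upgrade ``orthogonal'' to ``rotation'' I would rule out the determinant $-1$ (reflection) case using the physical orientation convention: both $\tilde{\mathbf{N}}$ and the recovered $\mathbf{N}$ are constrained to face the observer (positive viewing component), so the recovered frame carries the same handedness as the true one, forcing $\det\mathbf{Z}=+1$ and making $\mathbf{Z}\in SO(3)$. I expect the main obstacle to be the orthogonality step — specifically, stating the ``richness'' of the observed normals at a level of rigor matching the paper, i.e.\ justifying that the scene normals are not trapped on a lower-complexity quadric so that $\mathbf{M}=\mathbf{0}$ is truly forced; for a generically or continuously varying surface this is immediate, but it is the only place where an honest assumption about the scene enters. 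I would also take care that the algebra respects the row-vector convention, so that the norm-preservation condition attaches to $\mathbf{Z}^{-1}$ acting on the right and produces $\mathbf{M}=\mathbf{Z}^{-1}(\mathbf{Z}^{-1})^{\mathrm{T}}-\mathbf{I}$ rather than its transpose.
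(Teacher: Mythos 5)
Your proof takes a genuinely different route from the paper's, and on the central point it is more rigorous. The paper's entire argument is: since every $\tilde{\mathbf{N}}_p$ and $\mathbf{N}_p$ is a unit row vector, ``there is a rotation matrix $\mathbf{T}$'' with $\mathbf{T}\tilde{\mathbf{N}}_p^{\mathrm{T}}=\mathbf{N}_p^{\mathrm{T}}$ for all $p$, and since the full-rank system $\tilde{\mathbf{N}}\mathbf{Z}^{-1}=\mathbf{N}$ determines $\mathbf{Z}$ uniquely, $\mathbf{T}=\mathbf{Z}$. But the existence of a \emph{single} $\mathbf{T}$ valid for all pixels simultaneously is precisely the content of the lemma (per pixel it is trivial; a common one is not), so the paper essentially asserts what it should prove. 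Your argument --- writing $\tilde{\mathbf{N}}_p\mathbf{M}\tilde{\mathbf{N}}_p^{\mathrm{T}}=0$ with symmetric $\mathbf{M}=\mathbf{Z}^{-1}(\mathbf{Z}^{-1})^{\mathrm{T}}-\mathbf{I}$ and forcing $\mathbf{M}=\mathbf{0}$ from the richness of the normals --- is the standard correct derivation of orthogonality, and it also surfaces the scene assumption the paper hides: full rank alone is \emph{not} sufficient, because for a symmetric $\mathbf{M}$ of mixed signature the cone $\{\mathbf{x}:\mathbf{x}^{\mathrm{T}}\mathbf{M}\mathbf{x}=0\}$ contains unit vectors spanning $\mathbb{R}^3$; one needs the normals to avoid every such cone (e.g., to fill an open patch of the sphere), exactly the caveat you flag.

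The genuine gap is in your determinant step. ``Facing the observer'' does not constrain handedness: with the viewing axis along $z$, the reflection $\mathbf{Z}=\mathrm{diag}(-1,1,1)$ is orthogonal with $\det\mathbf{Z}=-1$, maps unit rows to unit rows, preserves the sign of the $z$-component (so $\mathbf{N}=\tilde{\mathbf{N}}\mathbf{Z}^{-1}$ still faces the camera), and reproduces exactly the same shading in Eq.~(\ref{eq:S=NL}). Hence visibility plus unit norm cannot force $\det\mathbf{Z}=+1$; excluding reflections needs an extra ingredient, such as integrability of the recovered normal field (for a surface $z=f(x,y)$, flipping $f_x\mapsto-f_x$ breaks equality of the mixed partials unless $f_{xy}\equiv0$) or an explicit disambiguation convention of the photometric-stereo solver (LDR). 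In fairness, the paper's proof never confronts this either --- its reasoning at best yields ``orthogonal,'' while properness is what Lemma~\ref{lemma:xy-axis} later relies on for the axis-angle representation of $\mathbf{Z}$ --- but since your proposal explicitly attempts this step, the handedness appeal should be replaced by one of these valid arguments.
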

\begin{proof} 
	Since ${\bf{\tilde N}} {\bf{Z}}^{-1} = {\bf{N}}$ is a full rank linear system, ${\bf{Z}}$ has a unique solution. Since both ${\bf{\tilde N}}_{p}$ and ${\bf{N}}_{p}$ are $1\times3$ row unit vectors for any point $p$, there is a rotation matrix $\bf{T}$ satisfying ${\bf{T}}{\bf{\tilde N}}_{p}^{\rm{T}}={\bf{N}}_{p}^{\rm{T}}$, then ${\bf{T}} = \bf{Z}$, i.e., ${\bf{Z}}$ is a rotation matrix.
\end{proof}

\begin{lemma} \label{lemma:z-axis}
	The ALR navigation guidance in radial-axis is independent to $\bf{Z}$.
\end{lemma}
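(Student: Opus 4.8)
\section*{Proof proposal for Lemma~\ref{lemma:z-axis}}

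The plan is to exploit the fact that, by Eq.~(\ref{eq:navcom}), the radial component of the navigation direction reduces to $r_t^{\rm m} = {\rm{sgn}}(A^{\rm{ref}} - A_t)$, which depends \emph{only} on the areas $A^{\rm{ref}} = \mathrm{A}({\bf{C}}^{\rm{ref}})$ and $A_t = \mathrm{A}({\bf{C}}_t)$ of the two SICs, not on their centers ${\bf{O}}^{\rm{ref}}$, ${\bf{O}}_t$. It therefore suffices to prove that each SIC area is unchanged when the computed lighting vectors are replaced by the true ones, i.e., that $\mathrm{A}(\cdot)$ is invariant under the ambiguity ${\bf{Z}}$. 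First I would recall from Lemma~\ref{lemma:ZisR} that ${\bf{Z}}$ is a rotation matrix, and that the decomposition of Eq.~(\ref{eq:S=NL}) relates the computed lighting vector to the true one by ${\bf{l}} = {\bf{Z}}{\bf{\tilde l}}$; orthogonality of ${\bf{Z}}$ then yields $\| {\bf{l}} \| = \| {\bf{\tilde l}} \|$ for both the reference frame and the current frame.

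Next I would describe how ${\bf{Z}}$ acts on the rendered sphere. From Eq.~(\ref{eq:PLM_1}) the intensity at a sphere point $p$ is ${\bf{B}}_p = \| {\bf{l}} \| \, {\rm{cos\ }}\alpha_p$, where $\alpha_p$ is the angle between the sphere normal ${\bf{N}}_p^{\rm{s}}$ and ${\bf{l}}$. Replacing ${\bf{\tilde l}}$ by ${\bf{l}} = {\bf{Z}}{\bf{\tilde l}}$ rigidly rotates the lighting direction over the unit sphere, so the entire intensity field on the fixed sphere normals ${\bf{N}}^{\rm{s}}$ is a rigid rotation of the original one: the set of attained intensity values is unchanged and only the spatial position of each isointensity level moves. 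Consequently a SIC extracted at any fixed intensity threshold is carried to a congruent circle on the sphere, with the same spherical radius and hence the same enclosed area. Applying this to ${\bf{C}}^{\rm{ref}}$ and ${\bf{C}}_t$ separately shows that $A^{\rm{ref}}$ and $A_t$, and therefore $r_t^{\rm m} = {\rm{sgn}}(A^{\rm{ref}} - A_t)$, do not depend on ${\bf{Z}}$.

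The step that needs the most care is the invariance of the intensity threshold itself, since the SICs are extracted at the median of ${\bf{B}}^{\rm{ref}} = {\bf{N}}^{\rm{s}} {\bf{l}}^{\rm{ref}}$, and one must ensure this median does not drift when ${\bf{l}}^{\rm{ref}}$ is replaced by ${\bf{Z}}{\bf{\tilde l}}^{\rm{ref}}$. I expect to settle this by the rotational symmetry of the sphere: because ${\bf{N}}^{\rm{s}}$ samples the unit sphere uniformly, rotating ${\bf{l}}^{\rm{ref}}$ merely permutes which normals attain which values, leaving the whole intensity histogram---and in particular its median---unchanged. With the threshold fixed and the isointensity circle congruent, the radial area comparison is preserved exactly, which establishes that the radial-axis navigation guidance is independent of ${\bf{Z}}$.
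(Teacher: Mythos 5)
Your ingredients are largely the right ones: the radial guidance $r_t^{\rm m} = {\rm sgn}(A^{\rm ref} - A_t)$ indeed depends only on the SIC areas, ${\bf Z}$ is a rotation (Lemma~\ref{lemma:ZisR}) so $\| {\bf l} \| = \| {\bf \tilde l} \|$, and a SIC extracted at a fixed threshold $c$ under lighting ${\bf l}$ is a circle whose area $\pi \left( 1 - c^2 / \| {\bf l} \|^2 \right)$ is determined by $c$ and $\| {\bf l} \|$ alone. The gap is the step you yourself flag as delicate. To conclude that $A^{\rm ref}$ and $A_t$ are \emph{individually} unchanged by ${\bf Z}$, you need the threshold itself (the median of ${\bf B}^{\rm ref}$) to be identical whether it is computed from ${\bf l}^{\rm ref} = {\bf Z}{\bf \tilde l}^{\rm ref}$ or from ${\bf \tilde l}^{\rm ref}$, and you justify this by asserting that ${\bf N}^{\rm s}$ samples the unit sphere uniformly, so that rotating the lighting merely permutes intensities and leaves the histogram fixed. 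The paper guarantees no such thing: ${\bf B}^{\rm ref}$ is a rendered image of the navigation ball (Fig.~\ref{fig:balltransformation}), i.e., essentially the camera-facing hemisphere, and the intensity histogram of such a rendering is \emph{not} invariant under rotations of the lighting --- frontal lighting illuminates the whole visible hemisphere while grazing or back lighting leaves much of it dark, so the median genuinely drifts with ${\bf Z}$. As written, your claim of exact area invariance can fail.

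The lemma survives because exact invariance is more than it needs, and the paper's own proof sidesteps the threshold question entirely. Whatever value $c$ takes in the computed world, both ${\bf C}^{\rm ref}$ and ${\bf C}_t$ are extracted at that \emph{same} $c$, so $c = \| {\bf l}^{\rm ref} \| \sqrt{1 - A^{\rm ref}/\pi} = \| {\bf l}_t \| \sqrt{1 - A_t/\pi}$; since the area is monotonically increasing in the norm for a fixed threshold, the ordering of the areas coincides with the ordering of the norms, and the ordering of the norms is ${\bf Z}$-invariant because ${\bf Z}$ is a rotation. Hence ${\rm sgn}(A^{\rm ref} - A_t) = {\rm sgn}\left( \| {\bf \tilde l}^{\rm ref} \| - \| {\bf \tilde l}_t \| \right)$, exactly the guidance the true lighting vectors would produce, no matter what the median happens to be. Your proof becomes correct if you replace the median-invariance step with this observation: you only need the two SICs to share one threshold, not that the threshold itself be ambiguity-free.
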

\begin{proof} From Proposition~{\ref{prop:ALR_apl}}, we know that for any point $p$ in SIC $\bf C$, $\bf{l}$ and ${\bf{N}}^{\rm{s}}_{p}$ have the same included angle $\alpha$. We have
	\begin{equation} \label{eq:z-axis_1}
	\begin{aligned}
	{{\bf{B}}_p} = {\bf{N}}_p^{\rm{s}} {{\bf{l}}}= {\left\| {\bf{l}} \right\|} {\rm{ cos\ }} {\alpha}.
	\end{aligned}
	\end{equation}
	Let $r$ be the radius of $\bf{C}$, then the area $A$ of $\bf{C}$ satisfies
	\begin{equation} \label{eq:z-axis_2}
	\begin{aligned}
	A=\pi r^2 = \pi {\rm{sin}}^{2}(\alpha) = \pi (1-{\rm{cos}^{2}(\alpha)}).
	\end{aligned}
	\end{equation}
	Since $\alpha \in [0, \frac{\pi}{2}]$, combining Eq.~(\ref{eq:z-axis_1}) and Eq.~(\ref{eq:z-axis_2}), we have ${{\bf{B}}_p} = {\left\| {\bf{l}} \right\|} \sqrt{1-\frac{A}{\pi}}$. Since the intensities of all points in reference SIC ${\bf{C}}^{\rm{ref}}$ and current SIC $\bf{C}$ are the same, so ${\left\| {\bf{l}}^{\rm{ref}} \right\|} \sqrt{1-\frac{{A}^{\rm{ref}}}{\pi}} = {\left\| {\bf{l}} \right\|} \sqrt{1-\frac{A}{\pi}}$. We can see ${A}^{\rm{ref}} \star A \Leftrightarrow \left\| {\bf{l}}^{\rm{ref}} \right\| \star \left\| {\bf{l}} \right\|$, where $\star$ denotes the operator in $\{<, =, >\}$, i.e., $A$ and $\left\| \bf{l} \right\|$ have the same magnitude relation. Besides, we know $\left\| {\bf{l}}^{\rm{ref}} \right\| = \left\| {{\bf{Z}} {\bf{\tilde l}}^{\rm{ref}} } \right\| = \left\| { {\bf{\tilde l}}^{\rm{ref}} } \right\|$ and $\left\| {\bf{l}} \right\| = \left\| {{\bf{Z}} {\bf{\tilde l}} } \right\| = \left\| { {\bf{\tilde l}} } \right\|$. It means $A^{\rm{ref}}$ and $A$ have the same magnitude relation to ground truth ${\tilde A}^{\rm{ref}}$ and $\tilde A$. Thus, the ambiguity matrix $\bf{Z}$ does not influence the ALR navigation in radial direction.
\end{proof}

\begin{lemma} \label{lemma:xy-axis}
	Let $\langle \beta, {\bf{e}} \rangle$ be the axis-angle representation of ${\bf{Z}}$. For the azimuthal and polar direction in Eq.~(\ref{eq:navcom}), if $\beta\leq\frac{\pi}{3}$, ${\bf{Z}}$ does not affect our ALR process. We can faithfully relocalize the light source to reference apl pose.
\end{lemma}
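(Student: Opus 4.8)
The plan is to reduce the claim to a per-axis convergence statement and then control the single quantity the ambiguity actually corrupts. The radial channel of Eq.~(\ref{eq:navcom}) is already settled by Lemma~\ref{lemma:z-axis}, so only the azimuthal and polar channels remain. The starting observation is that \emph{one and the same} rotation $\mathbf{Z}$ (a rotation, by Lemma~\ref{lemma:ZisR}) relates computed and true lighting for both the current and the reference frames, $\mathbf{l}_t=\mathbf{Z}\tilde{\mathbf{l}}_t$ and $\mathbf{l}^{\rm ref}=\mathbf{Z}\tilde{\mathbf{l}}^{\rm ref}$. Hence the fixed point read off the navigation ball, $\mathbf{l}_t=\mathbf{l}^{\rm ref}$, is \emph{exactly} the physical target $\tilde{\mathbf{l}}_t=\tilde{\mathbf{l}}^{\rm ref}$, because $\mathbf{Z}$ is invertible. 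So the location of the target is never corrupted by $\mathbf{Z}$; what must be checked is only that the bang--bang navigation still \emph{drives} the system to that target.

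The one genuine subtlety is an actuation mismatch: the signs in Eq.~(\ref{eq:navcom}) are read from the SIC-center angles $(\theta_t,\varphi_t)$ of the computed direction $\mathbf{Z}\tilde{\mathbf{l}}_t$, whereas the physical increment $\mathrm{diag}(\bm{\lambda}_t)\mathbf{m}_t$ moves the true direction $\tilde{\mathbf{l}}_t$. I would therefore show that the induced coordinate map $(\tilde\theta,\tilde\varphi)\mapsto(\theta,\varphi)$ is per-axis monotone, i.e.\ a commanded $+\theta$ (resp.\ $+\varphi$) physical step increases the observed $\theta_t$ (resp.\ $\varphi_t$), so that each independent bisection channel sees the correct error sign. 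Concretely, parametrize unit directions by $(\theta,\varphi)$; since $\tilde{\mathbf{l}}\mapsto\mathbf{Z}\tilde{\mathbf{l}}$ is the restriction of a linear map to the sphere, its differential is $\mathbf{Z}$ acting on tangent vectors, and the Jacobian of the angle map is obtained by expressing $\mathbf{Z}\,\partial_{\tilde\theta}\tilde{\mathbf{l}}$ and $\mathbf{Z}\,\partial_{\tilde\varphi}\tilde{\mathbf{l}}$ in the tangent frame $\{\partial_\theta\mathbf{l},\partial_\varphi\mathbf{l}\}$ at $\mathbf{l}$. The goal is positivity of the diagonal entries together with enough diagonal dominance to rule out an ordering reversal.

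The heart of the argument --- and the step I expect to be hardest --- is the geometric estimate that produces the threshold $\tfrac{\pi}{3}$. Writing $\mathbf{Z}$ in the axis--angle form $\langle\beta,\mathbf{e}\rangle$ of the statement, I would bound the worst-case tilt that a rotation of angle $\beta$ can impart to the azimuthal and polar tangent directions over the admissible side-lighting range of $\mathbf{l}$, maximized over the \emph{unknown} axis $\mathbf{e}$. The claim is that this tilt stays below the margin needed to preserve each angular ordering precisely when $\cos\beta\ge\tfrac12$, i.e.\ $\beta\le\tfrac{\pi}{3}$; identifying the extremal configuration of $\mathbf{e}$ relative to $\mathbf{l}$ and reducing it to a clean sufficient inequality is the delicate part, and is presumably what Fig.~\ref{fig:threemodel}(d) is meant to illustrate.

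Finally I would assemble convergence. With the radial channel handled by Lemma~\ref{lemma:z-axis} and the azimuthal and polar channels shown sign-correct for $\beta\le\tfrac{\pi}{3}$, each of the three independent bisection searches of Eq.~(\ref{eq:bisectstrategy}) satisfies the hypotheses of Lemma~\ref{prop:ALR_convergence} (with $\mu<2$), so $\mathrm{diag}(\bm{\lambda}_t)\mathbf{m}_t\to\mathbf{0}$ and the computed current SIC converges to the reference SIC. Coincidence of the SICs is equivalent to $\mathbf{l}_t=\mathbf{l}^{\rm ref}$ by Proposition~\ref{prop:ALR_apl}, and invertibility of $\mathbf{Z}$ then forces $\tilde{\mathbf{l}}_t\to\tilde{\mathbf{l}}^{\rm ref}$; that is, the light source is faithfully relocalized to the reference apl pose despite the normal \& lighting decomposition ambiguity.
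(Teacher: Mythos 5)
Your opening observation is the same as the paper's closing step: because one and the same rotation $\mathbf{Z}$ (Lemma~\ref{lemma:ZisR}) relates computed to true vectors in both frames, ${\bf{l}}_t={\bf{l}}^{\rm ref}$ is equivalent to ${\bf{\tilde l}}_t={\bf{\tilde l}}^{\rm ref}$, so the target itself is uncorrupted. However, the heart of your argument --- that the induced angle map $(\tilde\theta,\tilde\varphi)\mapsto(\theta,\varphi)$ is per-axis monotone with enough diagonal dominance \emph{precisely when} $\beta\leq\frac{\pi}{3}$ --- is left as a conjecture, and it is the step that fails. Spherical angle charts degenerate at the pole, and for a rotation axis $\mathbf{e}$ not aligned with the polar axis the two angular channels mix: a point within angular distance $\beta$ of the pole can have its observed azimuth reversed for \emph{arbitrarily small} $\beta$, and away from the poles the cross-coupling means the sign of $\theta^{\rm ref}-\theta_t$ need not equal the sign of $\tilde\theta^{\rm ref}-\tilde\theta_t$ even with positive diagonal Jacobian entries. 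Any threshold extracted this way would depend on how far the lighting directions stay from the degenerate configurations, so no clean, axis-independent $\frac{\pi}{3}$ bound can come out of this mechanism. A secondary leap: even granting sign-correctness, Lemma~\ref{prop:ALR_convergence} only shows ${\bm{\lambda}}_t\to\mathbf{0}$; it does not by itself show the iterates converge to the crossing point, so your final assembly also needs more than you invoke.

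The paper's proof uses a different adjustment model and a different source for the $\frac{\pi}{3}$ constant. It does not analyze the sign-based bisection of Eq.~(\ref{eq:navcom}) at all; instead it idealizes each iteration as commanding the \emph{full} correction rotation ${\bf{T}}_t$ (defined by ${\bf{l}}^{\rm ref}={\bf{T}}_t{\bf{l}}_t$) and executing it physically, ${\bf{P}}_t={\bf{T}}_t$, on the true vector ${\bf{\tilde l}}_{t+1}={\bf{P}}_t{\bf{\tilde l}}_t$. Substituting ${\bf{l}}_t={\bf{Z}}{\bf{\tilde l}}_t$ yields the residual recurrence ${\bf{T}}_{t+1}={\bf{T}}_t{\bf{Z}}{\bf{T}}_t^{-1}{\bf{Z}}^{-1}$, a commutator of ${\bf{T}}_t$ and ${\bf{Z}}$, and convergence of ${\bf{T}}_t$ to the identity when $\beta\leq\frac{\pi}{3}$ is quoted from Theorem 1 of~\cite{our2018ACR}. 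In other words, $\frac{\pi}{3}$ is a contraction threshold for this rotation-commutator iteration, not a monotonicity threshold for spherical coordinates; that is the idea missing from your proposal.
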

\begin{proof} The azimuthal and polar adjustments of light source on sphere only cause the change of lighting direction. Since we use the same $\bf R$ and $\bf N$ to calculate ${\bf{l}}_t$ and ${\bf{l}}^{\rm ref}$, it satisfies ${\bf{l}}^{\rm{ref}}={\bf{Z}} {\bf{\tilde l}}^{\rm{ref}}$ and ${\bf{l}}_{t} =  {\bf{Z}} {\bf{\tilde l}}_{t}$, where ${\bf{\tilde l}}^{\rm ref}$ and ${\bf{\tilde l}}_{t}$ are the real reference and current lighting vectors. From Fig.~{\ref{fig:threemodel}}(d), we have
	\begin{equation} \label{eq:xy-axis_1}
	\begin{aligned}
	{\bf{l}}^{\rm{ref}} = {{\bf{T}}}_{t} {\bf{l}}_{t} = {{\bf{T}}}_{t+1} {\bf{l}}_{t+1}, \quad {\bf{l}}_{t} =  {\bf{Z}} {\bf{\tilde l}}_{t}, \quad  {\bf{\tilde l}}_{t+1} = {\bf{P}}_{t} {\bf{\tilde l}}_{t},
	\end{aligned}
	\end{equation}
	where $t$ denotes $t$th iteration, rotation matrix ${\bf{T}}_{t}$ denotes the matrix form of navigation from current lighting to the reference one, ${\bf{P}}_{t}$ is the real light source adjustment realized by robotic platform or hand. From Eq.~(\ref{eq:xy-axis_1}), we have ${\bf{T}}_{t+1} = {\bf{T}}_{t} {\bf{Z}} {\bf{P}}_{t}^{-1} {\bf{Z}}^{-1}$. Generally, we pursue to adjust light source by ${\bf{T}}_{t}$, i.e., ${\bf{P}}_{t} = {\bf{T}}_{t}$. So, ${\bf{T}}_{t+1} = {\bf{T}}_{t} {\bf{Z}} {\bf{T}}_{t}^{-1} {\bf{Z}}^{-1}$. From Theorem 1 of~\cite{our2018ACR}, we know ${\bf{T}}_{t}$ infinitely approaches unit matrix if $\beta \leq \frac{\pi}{3}$, i.e., ${\bf{l}}_{t}$ and ${\bf{l}}^{\rm{ref}}$ are coincided. Since ${\bf{\tilde l}}^{\rm{ref}} = {\bf{Z}}^{-1}{\bf{l}}^{\rm{ref}}$ and ${\bf{\tilde l}}_{t} = {\bf{Z}}^{-1}{\bf{l}}_{t}$, ${\bf{\tilde l}}_{t}$ and ${\bf{\tilde l}}^{\rm{ref}}$ are coincided too. Hence, we can say that users can always relocalize the light source to the reference pose if $\beta \leq \frac{\pi}{3}$.
\end{proof}

The convergence condition of $\bf Z$ in Lemma~{\ref{lemma:xy-axis}} is equivalent to the one of lighting vector, i.e., if the angle difference between ${\bf{\tilde l}}$ and ${\bf{l}}$ is not larger than $\frac{\pi}{3}$, $\bf{Z}$ does not affect the convergence of our ALR-apl. In fact, the convergence condition can be easily satisfied by current photometric stereo methods~\cite{PS:ACFSUPSDM12}, which is empirically verified and discussed in detail in Sec.~\ref{exp:covgvalid}.

\section{ALR under More Realistic Lighting}\label{sec:ALR-NPLNSL}
In fact, ideal parallel lighting is inexistent in the real world. The commonly-used realistic light sources include near point light (NPL) source and small near surface light (sNSL) source. However, the NPL model or sNSL model usually cannot satisfy the requirement of real-time navigation response of ALR, because of the model's sophistication. Fortunately, in this paper, we prove that using the proposed ALR-apl method can also be applied to the above two kinds of light sources.

\subsection{ALR under near point lighting} \label{sec:NPL}
From Fig.~\ref{fig:threemodel}(b), near point lighting assigns different scene points with distance-related lighting directions,
\begin{equation} \label{eq:NPLmodel}
\begin{aligned}
{\bf{I}}_p=&{\rm{F}}_{\rm{NPL}}( {\rm{\mathcal L}}({\rm{\bm{\rho}}} \mid {\bf{\Theta}}), {\mathcal{S}}_{\tau + \vartriangle \tau}(p)) \\=&{{\bf{R}}_{p}} {{\bf{S}}_{p}} = {e} {\bf{N}}_p  {{({{\bm{\rho}}} - {{\bf{X}}_p})}} /{{{{\left\| {{{\bm{\rho}}} - {{\bf{X}}_p}} \right\|}^3}}}{{\bf{R}}_{p}},
\end{aligned}
\end{equation}
where ${{\bf{S}}} \in \mathbb{R}^{P}$ is the shading image, $\bm{\rho} \in \mathbb{R}^{3}$ is the near point light source position, ${\bf{X}}_p \in \mathbb{R}^{3}$ indicates the spatial coordinate of point $p$, $e$ is the lighting power. 

\begin{prop}[SICs \& shading equivalence] \label{prop:ALR_npl}
	Under NPL model, Given an arbitrary near point light source position, the spherical isointensity set $\bf{C}$ acquired from the rendered image $\bf{B}$ always forms a circle, under the view that light source points to the sphere center. Iff the reference and current SICs ${\bf{C}}^{\rm{ref}}$ and ${\bf{C}}_{t}$ coincide completely, the reference and current images, ${\bf{I}}^{\rm{ref}}$ and ${\bf{I}}_{t}$, are the same.
\end{prop}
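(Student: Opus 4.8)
The plan is to mirror the structure of the proof of Proposition~\ref{prop:ALR_apl}, replacing the linear parallel-lighting shading by the nonlinear near point lighting model of Eq.~(\ref{eq:NPLmodel}), and to isolate the one genuinely new analytic ingredient. First I would render the near point light onto the unit ball, so that each point $p$ has ${\bf{X}}_p^{\rm{s}} = {\bf{N}}_p^{\rm{s}}$ (position equals outward normal on a unit sphere centred at the origin) and unit reflectance. Writing $d = \|{\bm{\rho}}\|$ and letting $\psi$ be the angle between ${\bf{N}}_p^{\rm{s}}$ and ${\bm{\rho}}$, substitution into Eq.~(\ref{eq:NPLmodel}) collapses the rendered intensity to a one-dimensional profile
\begin{equation}
{\bf{B}}_p = e\,\frac{d\cos\psi - 1}{(d^2 - 2d\cos\psi + 1)^{3/2}},
\end{equation}
which depends on $p$ only through $\psi$. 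This rotational symmetry about the axis ${\bm{\rho}}/d$ is the whole content of the circle claim: the intensity level set at a value $I_0$ consists of points of constant $\psi$, i.e. a latitude circle about the light direction, and it appears as a circle exactly under the view along which the source points at the sphere centre. I would restrict attention to the lit, front-facing cap, where the profile is strictly monotone in $\psi$, so that each intensity level determines a single circle.

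For the shading-equivalence (iff), I would argue sufficiency as follows. If ${\bf{C}}^{\rm{ref}}$ and ${\bf{C}}_t$ coincide as subsets of the sphere, then, since each is a latitude circle of its own symmetry axis, coincidence forces (i) a common axis, giving $\hat{{\bm{\rho}}}^{\rm{ref}} = \hat{{\bm{\rho}}}_t$ (equal light directions), and (ii) a common polar angle $\psi_0$ at the shared intensity threshold $I_0$ (the median of ${\bf{B}}^{\rm{ref}}$), giving the scalar equation relating $d^{\rm{ref}}$ and $d_t$ through the profile above. Establishing that this pins down $d^{\rm{ref}} = d_t$ --- hence ${\bm{\rho}}^{\rm{ref}} = {\bm{\rho}}_t$ --- is the crux, and once it holds the conclusion is immediate: with ${\bf{N}}$ and ${\bf{R}}$ unchanged and identical source positions, Eq.~(\ref{eq:NPLmodel}) yields ${\bf{I}}^{\rm{ref}} = {\bf{I}}_t$. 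Necessity is the easy direction: equal images with a fixed scene give equal shadings, the same fitted source position, hence identical rendered balls and coinciding SICs, so (as in Proposition~\ref{prop:ALR_apl}) I would only sketch it.

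The hard part is exactly step (ii): unlike the parallel case, where the shading is \emph{linear} in ${\bf{l}}$ so that agreement on three non-coplanar circle points forces ${\bf{l}}^{\rm{ref}} = {\bf{l}}$ by a trivial linear system, here the dependence on ${\bm{\rho}}$ is nonlinear and a single SIC at one intensity level carries only three scalar constraints (two for the axis, one for $\psi_0$) against the three degrees of freedom of ${\bm{\rho}}$. I therefore expect the real work to be showing that the map $d \mapsto {\bf{B}}(\psi_0; d)$ is injective on the physical range $d > 1$ at the median level, mirroring the monotone area-versus-$\|{\bf{l}}\|$ relation used in Lemma~\ref{lemma:z-axis}. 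A direct derivative computation shows this map is only piecewise monotone for a general fixed $\psi_0$, so I anticipate needing a mild operating assumption --- e.g. a sufficiently distant source, or the fact that the median level confines $\cos\psi_0$ to the monotone branch --- to close the argument cleanly; flagging and justifying this regime is where I would spend most of the effort.
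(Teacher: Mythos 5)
Your plan follows the same route as the paper's own proof: render the source onto the unit sphere, observe that the intensity depends on $p$ only through the polar angle about the axis from the sphere center to the source (the paper phrases this as ``$\cos\gamma_p$ and $\|{\bf l}_p\|$ have the same magnitude relation'', which is your monotone profile in disguise), conclude that the SICs are latitude circles seen as circles along that axis, and treat necessity as the easy direction. The divergence is exactly at the step you call the crux. The paper does not do the distance analysis at all: from ${\bf B}^{\rm ref}_p={\bf B}_p$ on the common circle it infers $\gamma^{\rm ref}_p=\gamma_p$ by re-invoking the same magnitude relation, then passes to ${\bf N}^{\rm s}_p{\bf E}^{\rm ref}={\bf N}^{\rm s}_p{\bf E}$ for all $p$ on the circle (Eq.~(\ref{eq:NPLM_3})) and concludes ${\bf E}^{\rm ref}={\bf E}$. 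That inference is invalid across two different sources: the coupling of $\cos\gamma_p$ and $\|{\bf l}_p\|$ holds as $p$ varies with the source \emph{fixed}, not as the source varies with $p$ fixed, and a farther source can subtend a smaller angle $\gamma_p$ with the two effects cancelling in ${\bf B}_p$ --- precisely the compensation your derivative computation detects.

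Moreover, your worry is not a removable technicality; as stated, the sufficiency direction actually fails without the operating restriction you flag. With $e=1$ and $\cos\psi_0=\tfrac12$, the on-axis profile $f(d)=(d/2-1)/(d^2-d+1)^{3/2}$ is unimodal on the physical range $d>2$: it rises on $(2,\,\approx\!2.9)$ and falls thereafter, so every level below the peak is attained at two distances, e.g.\ $f(2.5)\approx f(3.57)\approx 0.024$. Since for each fixed $d$ the intensity is strictly monotone in $\psi$ on the lit cap, the level sets of both renders at this value are exactly the same $60^{\circ}$ latitude circle: the two SICs coincide completely while the source positions differ, hence the rendered balls and the scene images do not agree. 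So the regime assumption you anticipate (distances confined to one monotone branch of $f$, e.g.\ a sufficiently distant source, or disambiguation by a second SIC at another intensity level) is genuinely necessary; the paper's proof does not supply it but hides the need for it behind the cross-source inference above. In short, your proposal is structurally the paper's argument carried out honestly: to finish it you should state the operating assumption explicitly and prove injectivity of $d\mapsto f(d)$ on that branch, while the necessity direction (which both you and the paper only sketch) is unaffected.
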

\begin{proof} SICs: We replace $\bf{N}$ by a sphere normal $\bf{N}^{\rm{s}}$. Let $\bf E$ be the light source position. From Fig.~{\ref{fig:threemodel}}(b), we have
	\begin{equation} \label{eq:NPLM_2}
	\begin{aligned}
	{{\bf{B}}_{p}} = {\bf{N}}_p^{\rm{s}} {{\bf{l}}_p} \frac{e}{\left\| {{\bf{l}}_p} \right\|^3}
	= {\rm{  cos\ }}\gamma_{p}  \frac{e}{\left\| {{\bf{l}}_p} \right\|^2},
	\end{aligned}
	\end{equation}
	where ${\bf{l}}_p \in \mathbb{R}^{3}$ indicates the lighting vector and satisfies ${\bf{l}}_p={\bf{E}}-{{\bf{X}}_{p}}$, $\left\| {{\bf{l}}_p} \right\|$ represents the distance between $\bf{E}$ and ${\bf{X}}_p$, $\gamma _{p}$ denotes the included angle between ${\bf{l}}_p$ and ${\bf{N}}_p^{\rm{s}}$. Since ${\rm{cos\ }}\gamma _{p}$ and $\left\| {\bf{l}}_p \right\|$ have the same magnitude relation, thus
	\begin{equation}
	\begin{aligned}
	{{\bf{B}}_p} = {{\bf{B}}_q} 
	\Leftrightarrow ~\gamma _{p}=\gamma  _{q}, {\ }
	\left\| {{\bf{l}}_p} \right\| = \left\| {{\bf{l}}_q} \right\|.
	\end{aligned}
	\end{equation}
	So we have the same conclusion like Proposition~\ref{prop:ALR_apl}, i.e., the spherical isointensity set $\bf{C}$ forms a circle under the view that the point light source towards the sphere center.
	
	Shading equivalence: We first prove the sufficient condition. For each pixel $p$ in $\bf{C}^{\rm{ref}}$ (and $\bf{C}$) on $\bf{B}^{\rm{ref}}$ (and $\bf{B}$), we have
	${{\bf{B}}^{\rm{ref}}_{p}}={\rm{  cos\ }}\gamma^{\rm{ref}}_{p} \frac{e}{\left\| {{\bf{l}}^{\rm{ref}}_p} \right\|^2}$ and ${{\bf{B}}_{p}}= {\rm{ cos\ }}{\gamma} _{p} \frac{e}{\left\| {{\bf{l}}_p} \right\|^2}$.
	Since ${\bf{C}}^{\rm{ref}} = \bf{C}$, ${\bf{B}}^{\rm{ref}}_{p}={\bf{B}}_{p}$.
	Refer to Proposition~{\ref{prop:ALR_apl}}, $ {\rm{  cos\ }} \gamma _{p}$ and $\left\| {\bf{l}}_p \right\|$ have the same magnitude
	relation, we have $\gamma^{\rm{ref}} _{p} = {\gamma} _{p}$, then
	\begin{equation} \label{eq:NPLM_3}
	\begin{aligned}
	\gamma^{\rm{ref}}_{p} = {\gamma} _{p} \Leftrightarrow ~& {\bf{N}}_p^{\rm{s}} {{\bf{l}}^{\rm{ref}}_p} = {\bf{N}}_p^{\rm{s}} {{\bf{l}}_p} \\
	\Leftrightarrow ~& {\bf{N}}_p^{\rm{s}} ({\bf{E}}^{\rm{ref}} - {{\bf{X}}_p}) = {\bf{N}}_p^{\rm{s}} ( {\bf{E}} - {{\bf{X}}_p}) \\
	\Leftrightarrow ~& {\bf{N}}_p^{\rm{s}} {\bf{E}}^{\rm{ref}} = {\bf{N}}_p^{\rm{s}} {{\bf{E}}}.
	\end{aligned}
	\end{equation}
	Since Eq.~(\ref{eq:NPLM_3}) holds for any point in ${\bf{C}}^{\rm{ref}}$ and $\bf{C}$, we have ${\bf{E}}^{\rm{ref}} = \bf{E}$, i.e., the reference and current positions of near point light source are the same, so ${\bf{I}}^{\rm{ref}} = \bf{I}$. The proof of necessary condition is similar to the sufficient condition and we do not provide the details.
\end{proof}

\begin{prop}[ALR navigation equivalence] \label{prop:ALR_npl_equal}
	The ALR approach for the apl model is also applicable to NPL model.
\end{prop}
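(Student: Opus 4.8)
The plan is to reduce the NPL navigation to the same three-axis, sign-driven scheme already validated for the apl model, leaning on the structural facts established earlier. I would first note that Proposition~\ref{prop:ALR_npl} already supplies the exact NPL analogue of the terminal condition the apl navigation relies on: the reference and current SICs $\mathbf{C}^{\rm ref}$ and $\mathbf{C}_t$ coincide iff the reference and current light-source positions coincide, in which case $\mathbf{I}^{\rm ref}=\mathbf{I}_t$. Hence the only thing left to show is that the \emph{same} navigation direction $\mathbf{m}_t$ of Eq.~(\ref{eq:navcom}) and the \emph{same} bisection magnitude $\bm{\lambda}_t$ of Eq.~(\ref{eq:bisectstrategy}) actually drive the NPL source to that terminal state.

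First I would parameterize the NPL source in the spherical frame of Fig.~\ref{fig:balltransformation} as $\bm{\rho}_t=[r_t,\theta_t,\varphi_t]^{\mathrm T}$, with $(\theta_t,\varphi_t)$ the angular direction from the sphere center to the source and $r_t=\|\mathbf{E}\|$ the distance. The first key step is to read off what each coordinate contributes to the rendered ball. Because the NPL rendering on the unit sphere (Eq.~(\ref{eq:NPLM_2})) is rotationally symmetric about the line joining the sphere center to the source, the isointensity set $\mathbf{C}_t$ is a circle whose center $\mathbf{O}_t=[1,\theta_t,\varphi_t]^{\mathrm T}$ points exactly along the source direction; thus the azimuthal and polar coordinates of $\mathbf{O}_t$ equal those of $\bm{\rho}_t$. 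The second key step is to show that, under the common median threshold fixed by $\mathbf{B}^{\rm ref}$, the SIC area $A_t$ is a strictly monotone function of the radial distance $r_t$, so $A_t$ faithfully encodes $r_t$. Together these give a monotone, axis-wise-decoupled correspondence $[A_t,\theta_t,\varphi_t]\leftrightarrow[r_t,\theta_t,\varphi_t]$ between the navigation-ball observables and the pose coordinates, exactly mirroring the apl case.

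With that correspondence in hand I would argue correctness of the direction and then invoke convergence. Since $A_t\leftrightarrow r_t$ is monotone and $(\theta_t,\varphi_t)$ are read directly off $\mathbf{O}_t$, the sign vector $\mathbf{m}_t=\mathrm{sgn}([A^{\rm ref},\theta^{\rm ref},\varphi^{\rm ref}]^{\mathrm T}-[A_t,\theta_t,\varphi_t]^{\mathrm T})$ points, independently along each spherical axis, in the direction that decreases $|r_t-r^{\rm ref}|$, $|\theta_t-\theta^{\rm ref}|$, and $|\varphi_t-\varphi^{\rm ref}|$. The magnitude update of Eq.~(\ref{eq:bisectstrategy}) is model-agnostic, and its convergence $\lim_{t\to\infty}\bm{\lambda}_t=\mathbf{0}$ for $\mu<2$ was proved in Lemma~\ref{prop:ALR_convergence} without any reference to the lighting model, so it applies verbatim. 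Hence $\mathrm{diag}(\bm{\lambda}_t)\mathbf{m}_t\to\mathbf{0}$ and $\bm{\rho}_t\to\bm{\rho}^{\rm ref}$, at which point Proposition~\ref{prop:ALR_npl} forces the SICs to coincide and the images to match, so the apl navigation indeed solves the NPL problem.

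The hard part will be the radial-axis monotonicity claim. For parallel light it was trivial: $A=\pi\sin^2\alpha$ together with $\mathbf{B}_p=\|\mathbf{l}\|\sqrt{1-A/\pi}$ (Lemma~\ref{lemma:z-axis}, Eq.~(\ref{eq:z-axis_2})) makes $A$ a clean bijection of $\|\mathbf{l}\|$. Under NPL the per-point lighting direction and the $1/\|\mathbf{l}_p\|^2$ falloff are coupled, so $\mathbf{B}_p$ depends on $r$ through both $\cos\gamma_p$ and $\|\mathbf{l}_p\|^2$; I would have to show that sliding the source radially shifts the fixed-threshold isointensity circle monotonically in area — e.g.\ by differentiating $A(r)$ along the symmetry axis, or by a geometric comparison argument that the threshold circle contracts/expands monotonically as $r$ grows. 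I expect the azimuthal and polar decoupling to follow almost immediately from rotational symmetry, so essentially the entire technical weight of Proposition~\ref{prop:ALR_npl_equal} rests on pinning down this single monotone dependence of SIC area on source distance.
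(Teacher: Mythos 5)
Your high-level architecture --- terminal condition from Proposition~\ref{prop:ALR_npl}, axis-wise correctness of ${\bf{m}}_t$, model-agnostic convergence of ${\bm{\lambda}}_t$ via Lemma~\ref{prop:ALR_convergence} --- is reasonable, and the angular part (SIC center coincides with the source direction, by rotational symmetry) is fine. But the step you yourself flag as carrying the entire technical weight, strict monotonicity of the fixed-threshold SIC area $A(r)$ in the source distance $r$, is not just left unproven: as stated it is false in general, and its failure mode is exactly the dangerous one. On the unit sphere with the source at distance $r>1$ and $u=\cos\psi$ the cosine of the polar angle from the source axis, Eq.~(\ref{eq:NPLM_2}) gives
\begin{equation*}
{\bf{B}} = \frac{e\,(ru-1)}{(r^2-2ru+1)^{3/2}}, \qquad
\frac{\partial {\bf{B}}}{\partial r} \;\propto\; ru^2 - 2(r^2+1)u + 3r,
\end{equation*}
the proportionality factor being positive. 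At the sub-source point ($u=1$) the bracket equals $-2(r-1)^2<0$, but at the terminator ($u=1/r$) it equals $r-1/r>0$: as the source recedes, the illuminated cap widens and intensities near its rim \emph{increase}. Consequently a low threshold (SIC near the terminator) makes $A$ \emph{increasing} in $r$, while a high threshold (SIC near the sub-source point) makes $A$ \emph{decreasing} in $r$. Whether your correspondence $A_t\leftrightarrow r_t$ is monotone, and in which direction, therefore depends on where the median of ${\bf{B}}^{\rm{ref}}$ places the SIC; in the bad regime the radial component of ${\rm{sgn}}(\cdot)$ in Eq.~(\ref{eq:navcom}) pushes the source the wrong way, and the bisection scheme of Eq.~(\ref{eq:bisectstrategy}) hunts around the wrong point. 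Closing the gap needs an explicit condition keeping the SIC inside the crossover angle, i.e.\ $u > u_-(r)=\bigl((r^2+1)-\sqrt{r^4-r^2+1}\bigr)/r$, throughout the adjustment; your sketch provides no such condition.

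It is also worth noting that the paper takes a genuinely different and much shorter route that sidesteps $A(r)$ entirely: it introduces the parallel lighting aligned with the sphere-center-to-NPL direction and shows (Eq.~(\ref{eq:NPLM_4})) that for any two sphere points the apl and NPL renderings satisfy the same order relation, so the two balls are order-isomorphic; since the SICs and their nesting are order-theoretic objects, the navigation information read from the apl analogy is unchanged under NPL, and Proposition~\ref{prop:ALR_npl} supplies the terminal equivalence. Your plan, if completed with a threshold condition of the kind above, would be more quantitative than the paper's argument (which never examines how the observables vary with $r$), but as written it rests on a monotonicity claim that can genuinely fail.
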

\begin{proof}
	As shown in Fig.~\ref{fig:threemodel}(b), there is an approximate parallel lighting $\bf{l}$ which parallels to the direction that the NPL towards sphere center. We use superscripts `apl' and `npl' to distinguish the two models. Given any two points $p$ and $q$, it satisfies
	\begin{equation} \label{eq:NPLM_4}
	\begin{aligned}
	&{\bf{B}}^{\rm{apl}}_p \star {\bf{B}}^{\rm{apl}}_q
	\Leftrightarrow ~\alpha_{p} \star \alpha_{q} \\
	\Leftrightarrow ~&\gamma _{p} \star \gamma _{q},~ \left\| {{\bf{l}}_p} \right\| = \left\| {{\bf{l}}_q} \right\| 
	\Leftrightarrow ~{\bf{B}}_p^{\rm{npl}} \star {\bf{B}}_q^{\rm{npl}},
	\end{aligned}
	\end{equation}
	where $\star$ denotes an operator in $\{ \rm{>,<,=} \}$. That is, ${\bf{B}}_p^{\rm{apl}}$ and ${\bf{B}}_p^{\rm{npl}}$ have the same magnitude relation. Thus, combining Proposition~{\ref{prop:ALR_npl}}, the proposed ALR-apl strategy can also be applied to near point lighting condition.
\end{proof}

Proposition~\ref{prop:ALR_npl} is similar to Proposition~\ref{prop:ALR_apl}. Besides, Proposition~\ref{prop:ALR_npl_equal} guarantees we can directly apply ALR-apl to NPL and avoid solving the complex inverse problem for NPL model.

\subsection{ALR under small near surface lighting}
\begin{figure}[t]
	\begin{center}
		\includegraphics[width=1.0\linewidth]{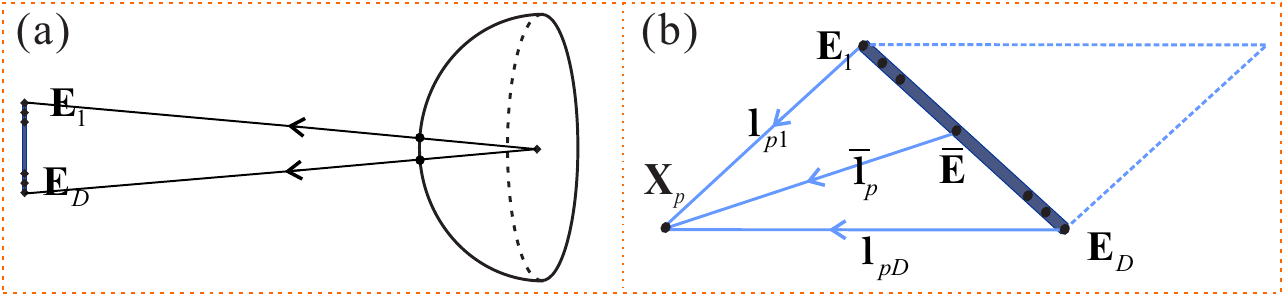}\vspace{-0.3cm}
	\end{center}
	\caption{\small { (a) The size of sNSL is much less than the distance between sNSL and scene center; (b) The sNSL can be described as a combination of multiple NPLs.}}
	\label{fig:distantNSL} \vspace{-0cm}
\end{figure}

Refer to Fig.~\ref{fig:threemodel}(c), a small near surface light source (sNSL) can be seen as the combination of many near point light sources. Thus, under sNSL model, we have 
\begin{equation} \label{eq:sNSLmodel}
\begin{aligned}
{\bf I}_p = &{\rm{F}}_{\rm{sNSL}}( {\rm{\mathcal L}}({\rm{\bm{\rho}}} \mid {\bf{\Theta}}), {\mathcal{S}}_{\tau + \vartriangle \tau}(p)) \\=& {{\bf{R}}_{p}}{{\bf{S}}_{p}} = \sum_{d=1}^D e{\bf{N}}_p  \frac{{({{\bf{E}}}_{d} - {{\bf{X}}_p})}}{{\left\| {{{\bf{E}}}_{d} - {{\bf{X}}_p}} \right\|}^3}{{\bf{R}}_p},
\end{aligned}
\end{equation}
where ${\bf{E}}_{d}$ indicates the $d$th near point light source, $e$ denotes the lighting power of all NPLs, $D$ is the number of NPLs, ${\bf{X}}_p \in \mathbb{R}^{3}$ indicates the spatial coordinate of point $p$. 
We can find that solving $\bm \rho$ from Eq.~(\ref{eq:sNSLmodel}) is a nonlinear problem which is hard to be solved. Replacing the scene normal by the sphere normal ${\bf{N}}^{\rm{s}}$, we have
\begin{equation} \label{eq:sNSL_1}
\begin{aligned}
{{\bf{B}}_{p}}= \sum_{d=1}^D {{\bf{N}}_p^{\rm{s}}}  {{{\bf{l}}_{pd}}}  \frac{e}{\left\| {{\bf{l}}_{pd}} \right\|^3},
\end{aligned}
\end{equation}
where ${\bf{l}}_{pd}$ indicates the lighting vector that $p$th scene point towards $d$th NPL, and it satisfies ${\bf{l}}_{pd} = {\bf{E}}_{d} - {\bf{X}}_{p}$. As shown in Fig.~\ref{fig:distantNSL}(a), generally, the size of sNSL is much less than the distance between sNSL and the scene center. Besides, we take no account of the rotation of sNSL and keep the sNSL midperpendicular always pointing to the scene center during ALR process. Under these constraints, we can approximatively consider that each NPL in sNSL has the same distance to a certain scene point ${\bf{X}}_{p}$, i.e., $\left\|{\bf{l}}_{p1}\right\|=\left\|{\bf{l}}_{p2}\right\|=...
=\left\|{\bf{l}}_{pD}\right\|=\left\|{\bf{\bar l}}_{p}\right\|$, where ${\bf{\bar l}}_{p}$ is the lighting vector that sNSL center points to scene point ${\bf{X}}_{p}$. Then we rewrite Eq.~(\ref{eq:sNSL_1}) as
\begin{equation} \label{eq:sNSL_2}
\begin{aligned}
{{\bf{B}}_{p}} =  {{\bf{N}}_p^{\rm{s}}}  \frac{e}{\left\| {{\bf{\bar l}}_{p}} \right\|^3} \sum_{d=1}^D {{{\bf{l}}_{pd}}}.
\end{aligned}
\end{equation}
Refer to Fig.~{\ref{fig:distantNSL}}(b), we have
\begin{equation} \label{eq:sNSL_3}
\begin{aligned}
&{\bf{l}}_{p1} + {\bf{l}}_{pD} = 2 {\bf{\bar l}}_{p}, \quad {\bf{l}}_{p2} + {\bf{l}}_{pD-1} = 2 {\bf{\bar l}}_{p}, \quad ......
\end{aligned}
\end{equation}
where ${\bf{\bar l}}_{p}={\bf{\bar E}}-{\bf{X}}_{p}$, and ${\bf{\bar E}}$ indicates the coordinate of sNSL center. According to Eq.~(\ref{eq:sNSL_3}), we can get $\sum_{d=1}^D {{\bf{l}}_{pd}} = D  {\bf{\bar l}}_{p}$. Then we rewrite Eq.~(\ref{eq:sNSL_2}) as
\begin{equation} \label{eq:sNSL_4}
\begin{aligned}
{{\bf{B}}_{p}}= D {{\bf{N}}_p^{\rm{s}}} {{{\bf{\bar l}}_{p}}}  \frac{e}{\left\| {{\bf{\bar l}}_{p}} \right\|^3}.
\end{aligned}
\end{equation}
We can see that $D$ is a constant and Eq.~(\ref{eq:sNSL_4}) has the same form with Eq.~(\ref{eq:NPLM_2}), so the sNPL can be seen as a NPL, and the spatial coordinate is $\bf{\bar E}$. Similar to the propositions about NPL, we can take the conclusion that the ALR-apl method can be applied to sNSL condition directly.

\section{Experimental Results}

\begin{figure}[!htb]
	\begin{center}
		\includegraphics[width=0.8\linewidth]{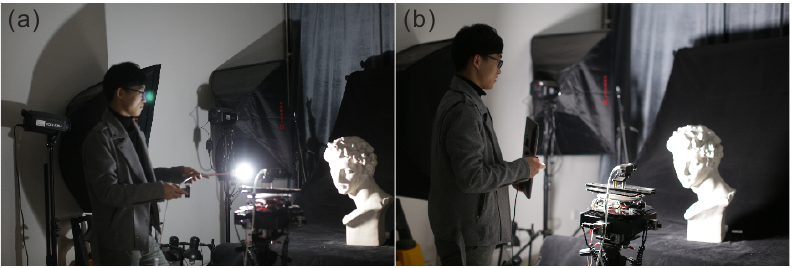} \vspace{-0.4cm}
	\end{center}
	\caption{\small { Working scene with near point light source (a) and small
			near surface light source (b).}}
	\label{fig:workingscene}\vspace{-0cm}
\end{figure}

\subsection{Setup}
We build 13 different scenes (S1--13) for evaluating the proposed ALR method. S1--3, S11 and S13 mainly exhibit near-Lambertian surface. The other scenes involve some non-Lambertian regions, e.g., transparency (S4, S12), hard cast shadow (S5--6) and specularity (S7--10, S12). See Fig.~\ref{fig:workingscene}, we use a small lamp bulb and a small handheld LED surface light source as the near point light (NPL) source and small near surface light source (sNSL), respectively. Besides, we employ a consumer robotic arm (uArm Swift Pro) to verify the effectiveness of the automatic ALR. All images are captured by a Canon 5D Mark III camera. Experiments are conducted in a consumer computer with i7 CPU.

\subsection{Convergence and effectiveness validation} \label{exp:covgvalid}


Refer to Eq.~(\ref{eq:bisectstrategy}), we use a \emph{bisection approaching strategy} to calculate the navigation vector ${\rm{diag}}({\bm{\lambda}}_t){\bf{m}}_t$ for $t$th iteration of ALR.
To verify the convergence of navigation vector, we conduct ALR using the robotic arm for scenes S4 and S5, then we record the navigation vector of each iteration. Fig.~\ref{fig:stepconvergence} shows the relation of the navigation vector (absolute value) and the iteration number for the two scenes. We can clearly see that the navigation direction is alternately changed during ALR process, and the navigation magnitude increases in the beginning and then gradually converges to zero. See Sec.~\ref{sec:ALRconvergence} for the convergence proof of navigation vector.


\begin{figure}[t]
	\begin{center}
		\includegraphics[width=1.0\linewidth]{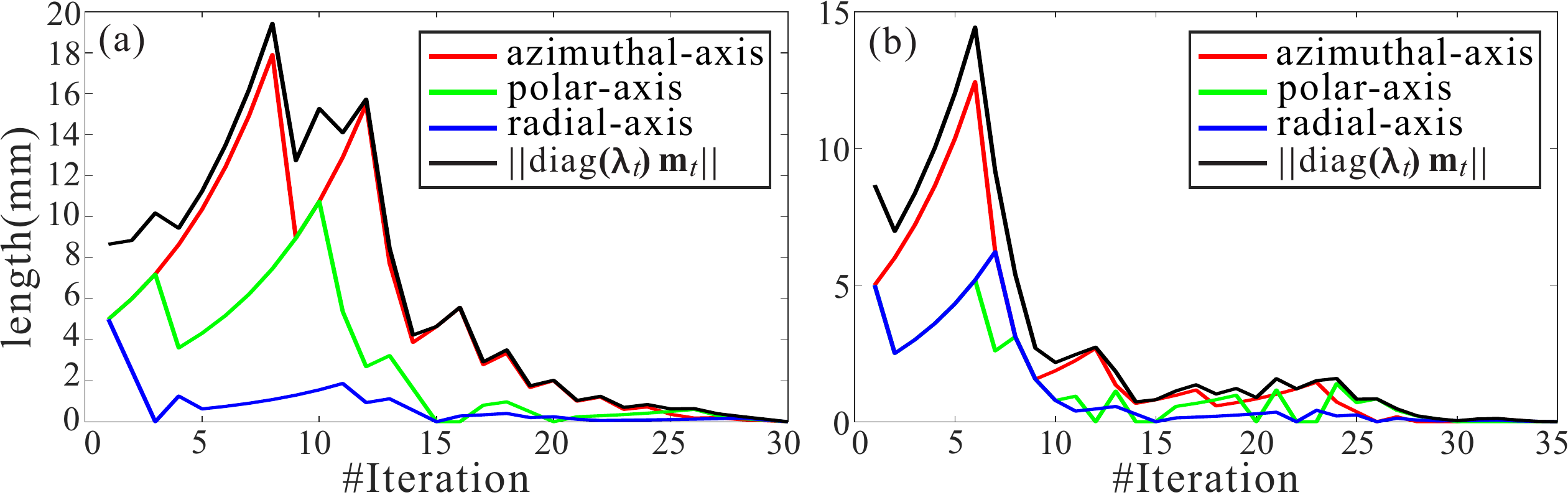} \vspace{-0.6cm}
	\end{center}
	\caption{ \small { Relation of the navigation vector and iteration number during ALR using robotic arm for scenes S4 (a) and S5 (b). }}
	\label{fig:stepconvergence} \vspace{-0.2cm}
\end{figure}

We use the ALR goodness $g$, i.e., the Intersection-over-Union (IoU) as the termination condition for ALR process. To verify the effectiveness, we conduct ALR using the robotic arm for S1 and S7, and we record the image and the corresponding $g$ for each iteration. Then we calculate the SSIM and MS-SSIM~\cite{wang2003multiscale} scores for each image. Fig.~\ref{fig:IoU}(a)--(b) show the relation of ALR goodness $g$ and the (MS-)SSIM score. Besides, the 10-times magnified absolute differences between the reference image and some captured images during ALR are also shown. Since the slight change of lighting condition may cause large difference of image appearance, it is reasonable that the (MS-)SSIM decreases slightly sometimes (see the variation tendency of (MS-)SSIM in Fig.~\ref{fig:IoU}(b)). Nevertheless, the (MS-)SSIM increases with the increase of $g$ on the whole. 

\begin{figure}[t]
	\begin{center}
		\includegraphics[width=1\linewidth]{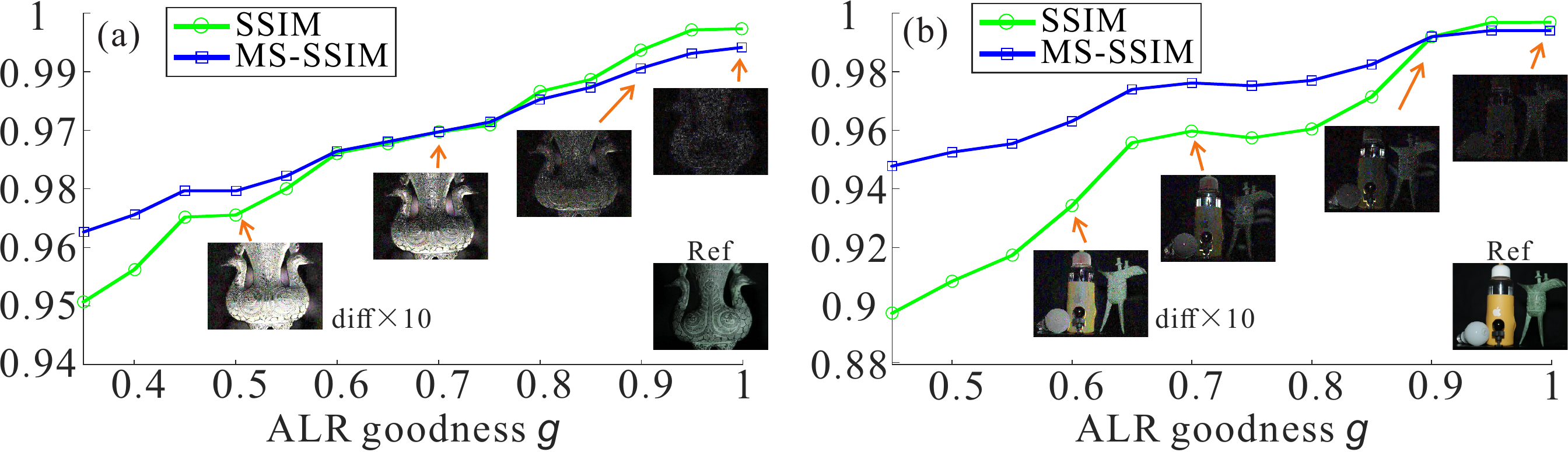} \vspace{-0.6cm}
	\end{center}
	\caption{ \small { Relation of ALR goodness $g$ and (MS-)SSIM for scenes S1 (a) and S7 (b). The difference images of reference image and some captured images during ALR are shown.}}
	\label{fig:IoU} \vspace{-0.0cm}
\end{figure}

Refer to Sec.~\ref{sec:ambiguity}, we prove that if the angle difference between ${\bf{\tilde l}}$ and ${\bf{l}}$ is not larger than $\frac{\pi}{3}$, the ambiguity matrix $\bf{Z}$ does not affect the convergence of our ALR approach. In fact, the condition is easy to be met. To verify this, we introduce the dataset~\cite{PS:FSTLS2015} which includes 7 statue scenes (e.g., Cat, Frog, Hippo) and corresponding ground truth normals. Each scene has 20 multi-illumination images. We calculate the scene normal by LDR~\cite{PS:ACFSUPSDM12} and calculate the mean angle error (MAE) for each scene. The MAE scores are shown in Table~\ref{table:angleLessThannpi3}. We find that these MAEs are much less than $\frac{\pi}{3}$. Therefore, according to Lemma~{\ref{lemma:xy-axis}}, we can confidently say that the ambiguity matrix $\bf{Z}$ does not influence our ALR method.

\begin{table}[!htb] \vspace{-0cm}
	\caption{\small Mean angle error (MAE) of the calculated normal.} \vspace{-0.5cm} \label{table:angleLessThannpi3}
	\begin{center}
		\renewcommand\arraystretch{1.4}
		\begin{tabular}{|c||c|c|c|c|c|c|c|} \hline
			Scene & Cat & Frog  & Hippo & Lizard & Pig & Scholar & Turtle\\
			\hline &&&&&&&\vspace{-0.425cm} \\ 
			MAE  & $\frac{\pi}{34}$ & $\frac{\pi}{28}$ & $\frac{\pi}{29}$ & $\frac{\pi}{45}$ & $\frac{\pi}{27}$ & $\frac{\pi}{15}$ & $\frac{\pi}{31}$\\
			\hline
		\end{tabular}
	\end{center}
\end{table}

The proposed ALR method only need a small amount of in-situ captured images for initialization. To verify this, we conduct ALR using robotic arm for S1--5 with different numbers of in-situ images. The 2nd to 5th rows of Table~\ref{table:diffimagenumber} show the averages of 4 commonly-used image similarity criteria, i.e., MSE, PSNR, SSIM and MS-SSIM~\cite{wang2003multiscale} under different image numbers. The 6th row of Table~\ref{table:diffimagenumber} shows the mean angle error (MAE) between the calculated scene normal and the one using 100 images. We can see the MAE reduces with the increase of image number, and the recurrence accuracy of ALR is stable and always well. In fact, our ALR method is effective as long as the calculated scene normal satisfies Lemma~{\ref{lemma:xy-axis}}, and more images cannot help to improve the recurrence accuracy. Empirically, we use 13 multi-illumination images for calculating scene normal and reflectance.

\begin{table}[!htb] \vspace{-0cm}
	\caption{\small {ALR accuracy vs. \#images used in initialization.}}\label{table:diffimagenumber}  \vspace{-0.4cm}
	\begin{center}
		\scriptsize
		\begin{tabular}{@{}|@{\hspace{2pt}}c@{\hspace{2pt}}||@{\hspace{2pt}}c@{\hspace{2pt}}|@{\hspace{2pt}}c@{\hspace{2pt}}|@{\hspace{2pt}}c@{\hspace{2pt}}|@{\hspace{2pt}}c@{\hspace{2pt}}|@{\hspace{2pt}}c@{\hspace{2pt}}|@{\hspace{2pt}}c@{\hspace{2pt}}|@{\hspace{2pt}}c@{\hspace{2pt}}|} \hline
			\# Imgs  & 5 & 10  & 20 & 40 & 60 & 80 & 100 \\
			\hline \hline
			MSE & 3.4289&	3.2433&	2.5803&	2.8792&	2.6086&	2.7137&	2.8708\\
			\hline
			PSNR &42.7792&	43.0208&	44.0140& 43.5379&	43.9665&	43.7950&	43.5506\\
			\hline
			MSSSIM & 0.9953&	0.9953&	0.9961&	0.9957&	0.9961&0.9960&	0.9957\\
			\hline
			MS\_MSSIM & 0.9964&	0.9964&	0.9967&	0.9965&	0.9967&	0.9966& 0.9966\\
			\hline \hline
			MAE & $\frac{\pi}{20}$ & $\frac{\pi}{23}$ & $\frac{\pi}{49}$ & $\frac{\pi}{87}$ & $\frac{\pi}{136}$ & $\frac{\pi}{305}$& 0\\
			\hline
		\end{tabular}
	\end{center}
\end{table}

\subsection{Quantitative comparison}
To compare with our ALR method, we use the small lamp bulb (NPL) and the small handheld LED surface light source (sNSL) to collect 13 multi-illumination images as the reference images for scenes S1--10 and scenes S11--13, respectively. We recur all the 13 side lightings using our ALR method for all scenes by hand (ALR\_H). Considering the robotic arm only has 3 degrees of freedom and it cannot keep the sNSL always towards the scene center, so we only conduct ALR with robotic arm (ALR\_R) for S11--13.
We use PTM~\cite{RTI:PTM01}, HSH~\cite{RTI:TAERIICLOUUNL11}, LDR~\cite{PS:ACFSUPSDM12} as the baselines. PTM~\cite{RTI:PTM01} and HSH~\cite{RTI:TAERIICLOUUNL11} are two image-based relighting methods. For PTM and HSH, we use a light probe to calibrate the lighting direction for each captured image. LDR~\cite{PS:ACFSUPSDM12} is a state-of-the-art uncalibrated photometric stereo method based on parallel lighting model. We carry out the 3 methods using the captured 13 images and generate 13 lighting recurrence images. \if 0For our ALR, we use a robotic arm to automatically reproduce the lighting condition.\fi We use MSE, PSNR, SSIM, MS-SSIM~\cite{wang2003multiscale} as accuracy metrics.

Fig.~\ref{fig:meanNPL} and Fig.~\ref{fig:meansNSL} show the quantitative comparisons of our ALR method and 3 baselines for scenes S1--10 and S11--13, respectively. Each node indicates the average evaluation of 13 lighting recurrence results and the up and down bar of each node denotes the variance. The average of each method for all scenes are also shown in Fig.~\ref{fig:meanNPL} and Fig.~\ref{fig:meansNSL}. Besides, all the scores of the 4 criteria for each scene can be approached in Tables~\ref{table:MSEvalue}--\ref{table:MSSSIMvalue}. We can see that our method ALR\_R or ALR\_H can achieve the best recurrence accuracy than baselines. Besides, from Tables~\ref{table:MSEvalue}--\ref{table:MSSSIMvalue}, we can find the evaluation scores of ALR\_R are slightly better than ALR\_H except for S8. This is because the light source adjustment by robotic arm is more stable than hand and it is more easy to get more accuracy recurrence results for robotic arm.

\begin{figure}[!htb]
	\begin{center}
		\includegraphics[width=1\linewidth]{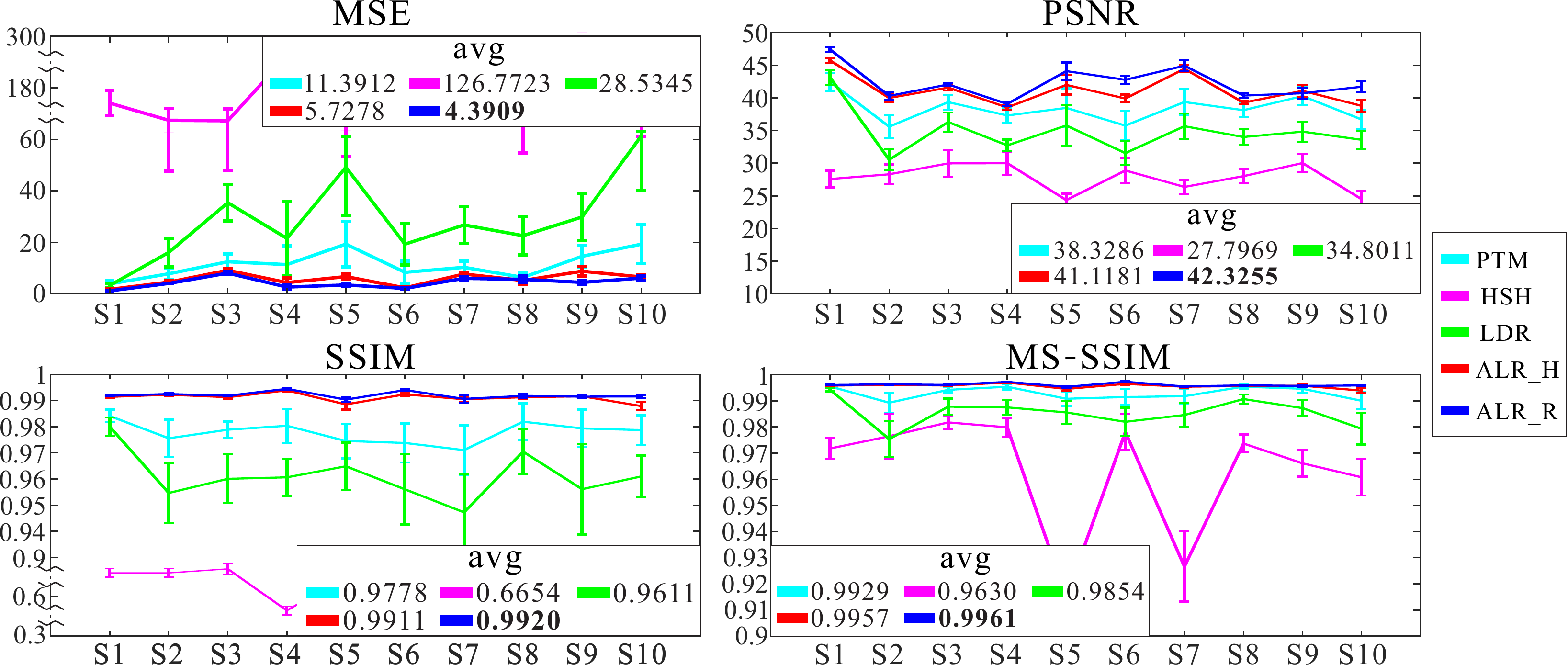} \vspace{-0.6cm}
	\end{center}
	\caption{\small {Quantitative comparison of near point light source.}} \vspace{-0cm}
	\label{fig:meanNPL} \vspace{-0cm}
\end{figure}

\begin{figure}[!htb]
	\begin{center}
		\includegraphics[width=1\linewidth]{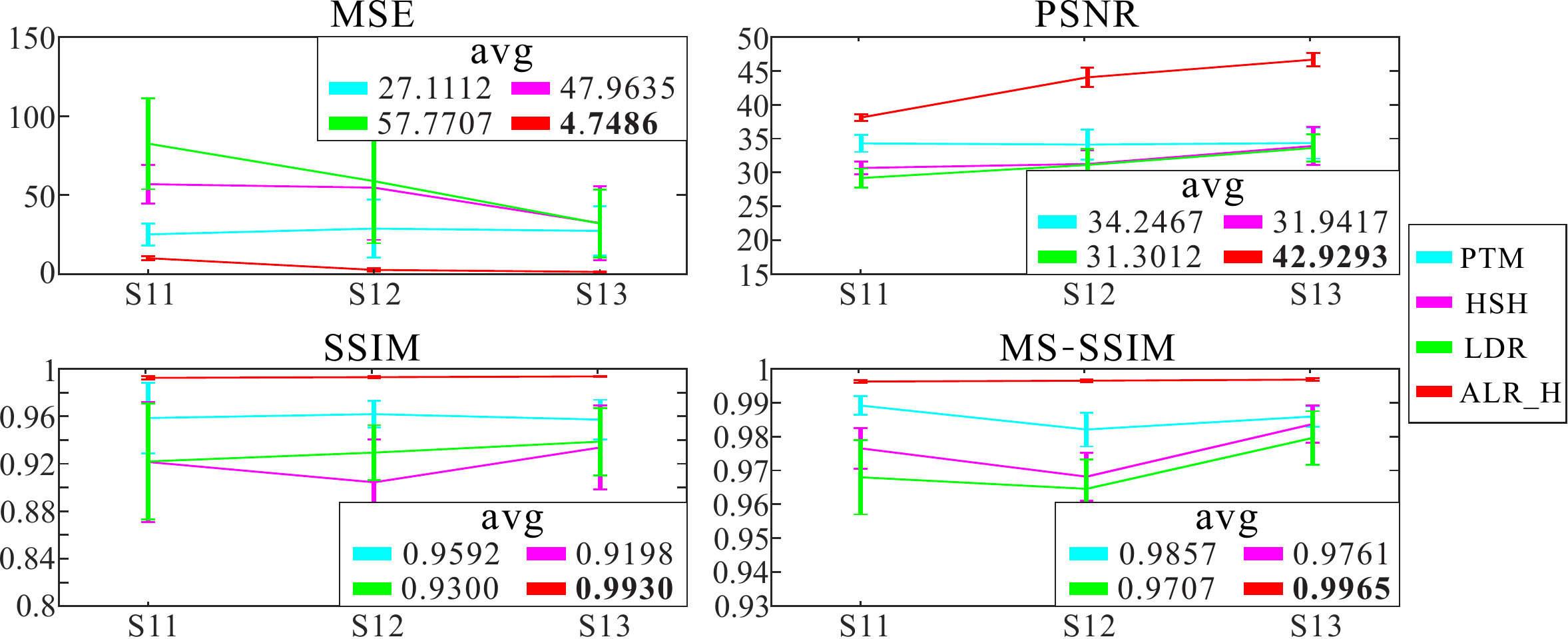} \vspace{-0.6cm}
	\end{center}
	\caption{\small {Quantitative comparison of small near surface light source.}} \vspace{-0cm}
	\label{fig:meansNSL} \vspace{-0cm}
\end{figure}


\begin{table}\vspace{-0.2cm}
	\fontsize{6pt}{6.75pt}\selectfont
	\caption{\small {Average MSEs for 13 scenes.}} \vspace{-0.5cm} \label{table:MSEvalue}
	\begin{center}
		\begin{tabular}{@{}|@{\hspace{1pt}}c@{\hspace{1pt}}||@{\hspace{1pt}}c@{\hspace{1pt}}|@{\hspace{1pt}}c@{\hspace{1pt}}|@{\hspace{1pt}}c@{\hspace{1pt}}|@{\hspace{1pt}}c@{\hspace{1pt}}|@{\hspace{1pt}}c@{\hspace{1pt}}|@{\hspace{1pt}}c@{\hspace{1pt}}|@{\hspace{1pt}}c@{\hspace{1pt}}|@{\hspace{1pt}}c@{\hspace{1pt}}|@{\hspace{1pt}}c@{\hspace{1pt}}|@{\hspace{1pt}}c@{\hspace{1pt}}|@{\hspace{1pt}}c@{\hspace{1pt}}|@{\hspace{1pt}}c@{\hspace{1pt}}|@{\hspace{1pt}}c@{\hspace{1pt}}|}
			\hline
			Method & S1 & S2 & S3 & S4 & S5 & S6 & S7 & S8 & S9 & S10 & S11 & S12 & S13  \\
			\hline		&&&&&&&&&&&&&\vspace{-0.18cm} \\ 
			\hline
			PTM & 3.89       & 7.83   & 12.49   & 11.39  &  19.33   &  8.39   & 10.32    & 6.41  &  14.60 & 19.28 & 25.16  & 28.82  & 27.36\\
			HSH & 118.16  &  72.12 &   70.67 &  244.34 &   92.67 &   154.44&   105.54 &   68.31  & 239.28 &  102.20  & 56.88 &  54.700  & 32.31\\
			LDR & 3.27  &   16.10 &   35.41  &  21.57  &  49.18 &   19.31  &  26.77  &  22.61   & 29.86  &  61.27 & 82.50  & 58.79 &  32.02 \\
			ALR\_H &1.75   &   4.57 &    9.16  &   4.39  &   6.72 &    2.36  &   7.72 &    {\bf 5.21}  &  8.81  &   6.59  & {\bf 10.10} &   {\bf 2.70} &   {\bf 1.44}\\
			ALR\_R &{\bf 1.19}  &   {\bf 4.07} &    {\bf 8.08}  &  {\bf 2.66} &    {\bf 3.48} &    {\bf 2.14}  &   {\bf 6.04} &  5.64  &  {\bf 4.48}  &   {\bf 6.13}  & NA& NA& NA\\
			\hline
		\end{tabular}
	\end{center}
\end{table}

\begin{table}\vspace{-0.2cm}
	\fontsize{6pt}{6.75pt}\selectfont
	\caption{\small Average PSNRs for 13 scenes.} \vspace{-0.5cm} \label{table:PSNRvalue}
	\begin{center}
		\begin{tabular}{@{}|@{\hspace{1.8pt}}c@{\hspace{1.8pt}}||@{\hspace{1.8pt}}c@{\hspace{1.8pt}}|@{\hspace{1.8pt}}c@{\hspace{1.8pt}}|@{\hspace{1.8pt}}c@{\hspace{1.8pt}}|@{\hspace{1.8pt}}c@{\hspace{1.8pt}}|@{\hspace{1.8pt}}c@{\hspace{1.8pt}}|@{\hspace{1.8pt}}c@{\hspace{1.8pt}}|@{\hspace{1.8pt}}c@{\hspace{1.8pt}}|@{\hspace{1.8pt}}c@{\hspace{1.8pt}}|@{\hspace{1.8pt}}c@{\hspace{1.8pt}}|@{\hspace{1.8pt}}c@{\hspace{1.8pt}}|@{\hspace{1.8pt}}c@{\hspace{1.8pt}}|@{\hspace{1.8pt}}c@{\hspace{1.8pt}}|@{\hspace{1.8pt}}c@{\hspace{1.8pt}}|}
			\hline
			Method & S1 & S2 & S3 & S4 & S5 & S6 & S7 & S8 & S9 & S10 & S11 & S12 & S13  \\
			\hline		&&&&&&&&&&&&&\vspace{-0.18cm} \\ 
			\hline
			PTM & 42.45  &39.34 &37.30 &38.46 &35.75   &39.36 & 38.11&  40.26 & 36.68 & 35.59 & 34.29  & 34.11  & 34.34\\
			HSH & 27.58 & 29.96 & 30.00 & 24.35 & 28.89 &   26.37 & 28.02& 30.02 & 24.50 &  28.30 & 30.67  & 31.25  & 33.90\\
			LDR & 43.10 & 36.29 & 32.72 & 35.76 & 31.54 &   35.66 &34.00 & 34.82 & 33.58 &  30.53 & 29.19  & 31.11  & 33.61 \\
			ALR\_H &45.71  & 41.56& 38.53 & 41.97 & 39.89&  44.44& 39.26 & {\bf 41.07} & 38.77 &39.98 & {\bf 38.12} &   {\bf 44.03} &   {\bf 46.64}\\
			ALR\_R &{\bf 47.39}    &   {\bf 42.04} &    {\bf 39.07}  &  {\bf 44.08} &    {\bf 42.76} &    {\bf 44.91}  &   {\bf 40.34} &  40.70  &  {\bf 41.69}  &   {\bf 40.29} & NA& NA& NA\\	
			\hline
		\end{tabular}
	\end{center}
\end{table}

\begin{table}\vspace{-0.2cm}
	\fontsize{5.6pt}{6.75pt}\selectfont
	\caption{\small Average SSIMs for 13 scenes.} \vspace{-0.5cm} \label{table:SSIMvalue}
	\begin{center}
		\begin{tabular}{@{}|@{\hspace{1pt}}c@{\hspace{1pt}}||@{\hspace{1pt}}c@{\hspace{1pt}}|@{\hspace{1pt}}c@{\hspace{1pt}}|@{\hspace{1pt}}c@{\hspace{1pt}}|@{\hspace{1pt}}c@{\hspace{1pt}}|@{\hspace{1pt}}c@{\hspace{1pt}}|@{\hspace{1pt}}c@{\hspace{1pt}}|@{\hspace{1pt}}c@{\hspace{1pt}}|@{\hspace{1pt}}c@{\hspace{1pt}}|@{\hspace{1pt}}c@{\hspace{1pt}}|@{\hspace{1pt}}c@{\hspace{1pt}}|@{\hspace{1pt}}c@{\hspace{1pt}}|@{\hspace{1pt}}c@{\hspace{1pt}}|@{\hspace{1pt}}c@{\hspace{1pt}}|}
			\hline
			Method & S1 & S2 & S3 & S4 & S5 & S6 & S7 & S8 & S9 & S10 & S11 & S12 & S13  \\
			\hline		&&&&&&&&&&&&&\vspace{-0.18cm} \\ 
			\hline
			PTM & 0.9841  & 0.9788 & 0.9803&  0.9745 & 0.9738 & 0.9710 &  0.9819 & 0.9793&  0.9787  & 0.9756 & 0.9585  &  0.9618  &  0.9572\\
			HSH & 0.7469  &  0.7467&  0.7758& 0.4727&  0.7212&   0.3762 &  0.6407 &  0.6982 & 0.6735 &  0.8020 &0.9215   & 0.9044  &  0.9337\\
			LDR & 0.9800  & 0.9601 &  0.9606 &  0.9649  & 0.9560& 0.9473 & 0.9704 &0.9561 & 0.9610   & 0.9546  & 0.9219  &  0.9294  &  0.9386\\
			ALR\_H &0.9914 &  0.9913&  0.9939 & 0.9885 & 0.9924&  0.9905&  0.9913  & {\bf 0.9916} & 0.9879  &  0.9922 & {\bf 0.9924} &   {\bf 0.9930} & {\bf 0.9936}\\
			ALR\_R &{\bf 0.9918}    &   {\bf 0.9918} &    {\bf 0.9945}  &  {\bf 0.9903} &    {\bf 0.9940} &    {\bf 0.9906}  &   {\bf 0.9918} &  0.9915  &  {\bf 0.9916} &   {\bf 0.9925} & NA& NA& NA\\	
			\hline
		\end{tabular}
	\end{center}
\end{table}

\begin{table}\vspace{-0.2cm}
	\fontsize{5.6pt}{6.75pt}\selectfont
	\caption{\small Average MS-SSIMs for 13 scenes.} \vspace{-0.5cm} \label{table:MSSSIMvalue}
	\begin{center}
		\begin{tabular}{@{}|@{\hspace{1pt}}c@{\hspace{1pt}}||@{\hspace{1pt}}c@{\hspace{1pt}}|@{\hspace{1pt}}c@{\hspace{1pt}}|@{\hspace{1pt}}c@{\hspace{1pt}}|@{\hspace{1pt}}c@{\hspace{1pt}}|@{\hspace{1pt}}c@{\hspace{1pt}}|@{\hspace{1pt}}c@{\hspace{1pt}}|@{\hspace{1pt}}c@{\hspace{1pt}}|@{\hspace{1pt}}c@{\hspace{1pt}}|@{\hspace{1pt}}c@{\hspace{1pt}}|@{\hspace{1pt}}c@{\hspace{1pt}}|@{\hspace{1pt}}c@{\hspace{1pt}}|@{\hspace{1pt}}c@{\hspace{1pt}}|@{\hspace{1pt}}c@{\hspace{1pt}}|}
			\hline
			Method & S1 & S2 & S3 & S4 & S5 & S6 & S7 & S8 & S9 & S10 & S11 & S12 & S13  \\
			\hline		&&&&&&&&&&&&&\vspace{-0.18cm} \\ 
			\hline
			PTM & 0.9955 &  0.9942 & 0.9954 &  0.9908  &0.9915 & 0.9918&  0.9954&  0.9946&  0.9900  & 0.9893 & 0.9891  &  0.9821  &  0.9859\\
			HSH & 0.9718 &  0.9818 &  0.9799  & 0.9152 &  0.9781&   0.9266  & 0.9737  & 0.9661 & 0.9608 & 0.9765 &0.9765   & 0.9682  &  0.9836\\
			LDR & 0.9942 &  0.9878 & 0.9875 & 0.9856 & 0.9820&  0.9846&  0.9907 & 0.9872 & 0.9793 &  0.9754 &0.9680  &  0.9646   & 0.9796\\
			ALR\_H &0.9958 & 0.9959 & 0.9970 &  0.9946 & 0.9965& 0.9954 & 0.9957 & {\bf  0.9958} & 0.9939 & 0.9962 & {\bf 0.9962} &   {\bf 0.9965} &   {\bf 0.9968}\\
			ALR\_R &{\bf 0.9961}   &   {\bf 0.9961} &    {\bf 0.9972}  &  {\bf 0.9953} &    {\bf 0.9973} &    {\bf 0.9955}  &   {\bf 0.9959} &  0.9957  &  {\bf 0.9959} &   {\bf 0.9963}  & NA& NA& NA\\	
			\hline
		\end{tabular}
	\end{center}
\end{table}

Fig.~\ref{fig:vscompar-npl} shows the recurrence results using NPL for several typical scenes S2 (near-Lambertian), S4 (transparent), S6 (cast shadow) and S9 (specular), including the zoom-in regions and corresponding 10-times magnified absolution difference images. Similarity, Fig.~\ref{fig:vscompar-nsl} shows the recurrence results for S11 (near-Lambertian) and S12 (transparent \& specular) using sNSL. We can see that our method ALR\_H or ALR\_R can generate more accurate lighting recurrence results than baselines for both quantitative comparison and visual perception. Furthermore, see Fig.~\ref{fig:vscompar-npl} and Fig.~\ref{fig:vscompar-nsl}, although PTM and LDR can also achieve not bad recurrence results for the near-Lambertian scenes, they cannot do well for the non-Lambertian scenes. This is because the non-Lambertian regions are usually more hard to reproduce than near-Lambertian regions for these SLR methods. Thanks to the physical lighting recurrence, although our ALR method is based on Lambertian assumption, it can still achieve excellent lighting recurrence accuracy for non-Lambertian scenes.

\begin{figure*}[t]
	\begin{center}
		\includegraphics[width=0.98\linewidth]{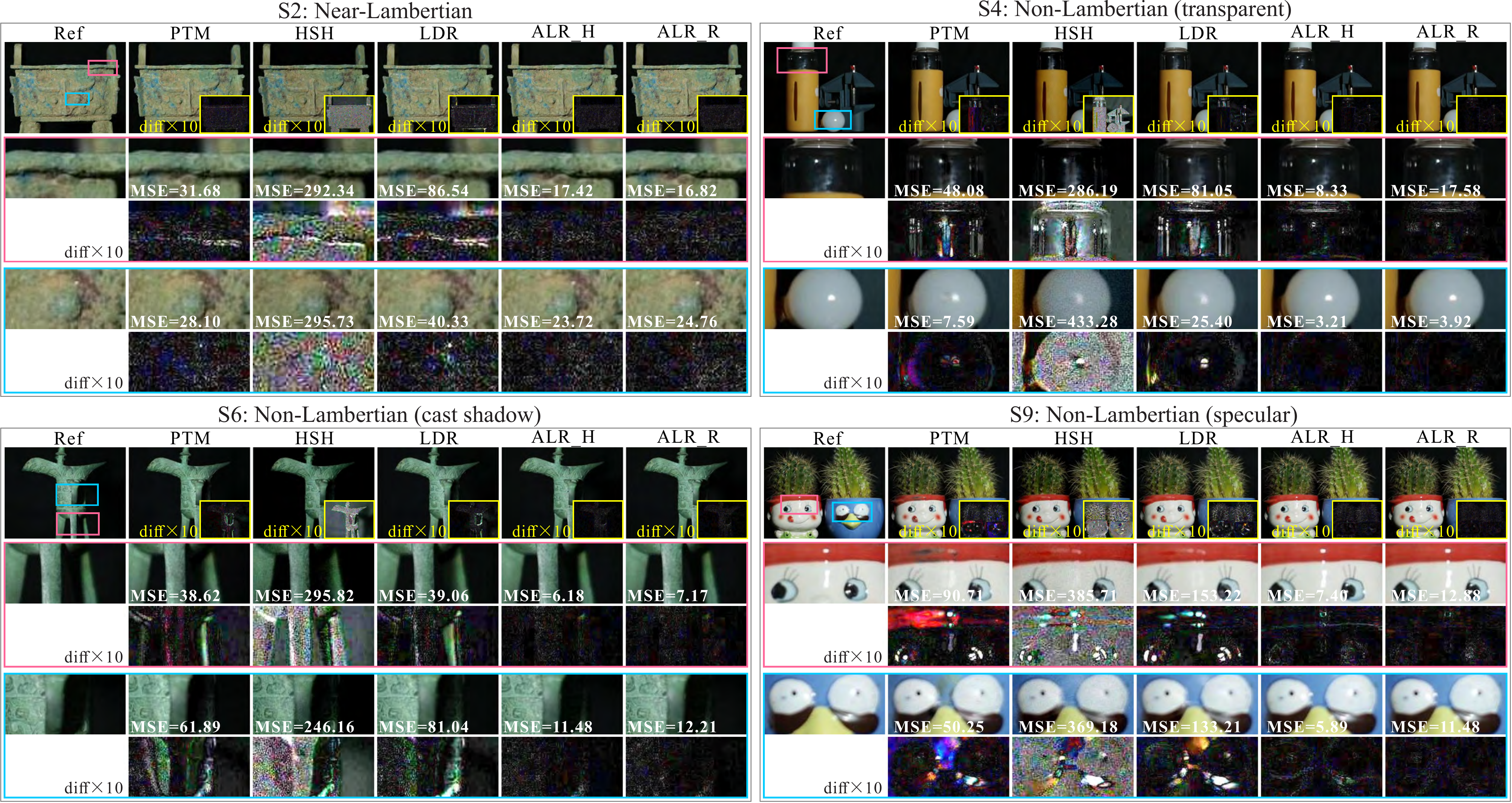} \vspace{-0.4cm}
	\end{center}
	\caption{ \small { Some results of our ALR method and 3 baselines for one near-Lambertian scene (S2) and 3 non-Lambertian scenes (S4, S6 and S9). For each scene, the first row shows the reference image and the lighting recurrence images of 4 methods. The 2--5 rows show the zoom-in regions and corresponding difference images (magnified by 10). Besides, the MSEs are also provided.}}
	\label{fig:vscompar-npl}\vspace{-0cm}
\end{figure*}

\begin{figure*}[t]
	\begin{center}
		\includegraphics[width=0.98\linewidth]{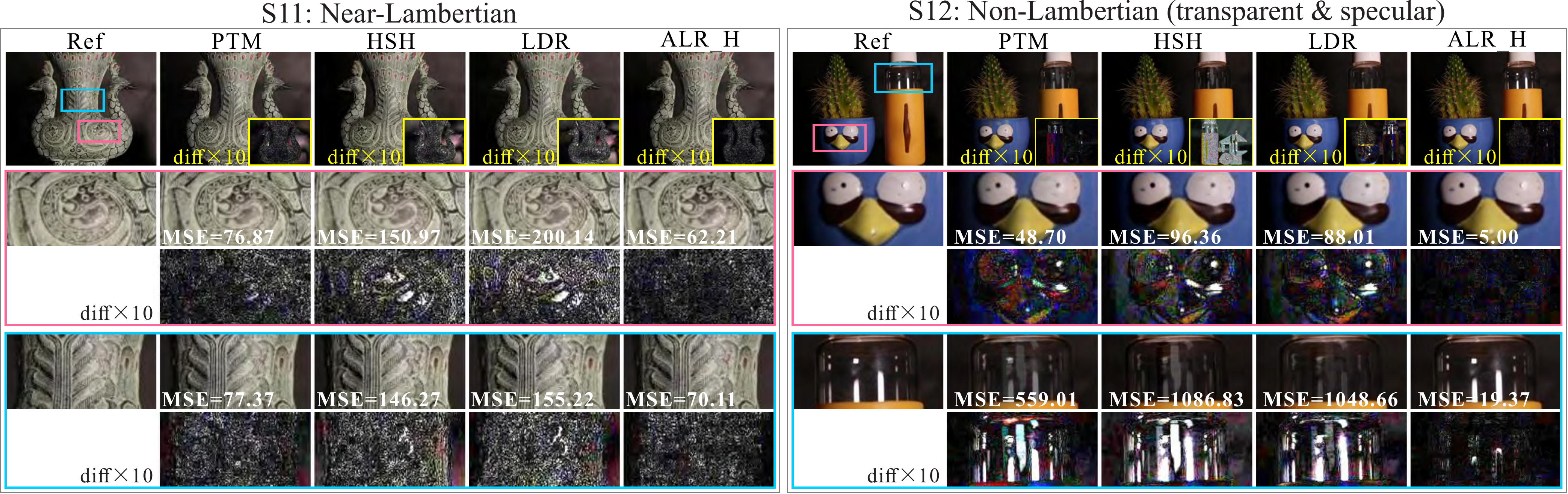} \vspace{-0.4cm}
	\end{center}
	\caption{ \small { Some results of the lighting recurrence methods with small near surface light source.} }
	\label{fig:vscompar-nsl}
\end{figure*}


\subsection{Speed comparison}
The proposed ALR method can generate real-time navigation feedback for adjusting light source. Table~\ref{table:speed} shows the FPSs of our ALR method under different image sizes. Since the cannon 5D Mark III camera we used cannot acquire the real-time original image (5760$\times$3840), we just use the live view image (960$\times$640) and perform downsampling by 2 (480$\times$320) to support light source navigation in our experiments. After ALR, we capture the 5760$\times$3840 image as the finial lighting recurrence image.
Besides, We record the average time of ALR with hand (ALR\_H) and robotic arm (ALR\_R). It takes about $70$s for ALR\_H and $30$s for ALR\_R in our experiments (exclude the time of image capturing and initialization). For ALR\_H, we can quickly adjust the light source to a local range of the target but it is not easy to achieve a better recurrence accuracy because of the instability of manual adjustment. In contrary, since the robotic arm cannot percept the scene, ALR can only rely on the bisection adjustment strategy to adjust light source, so the light source cannot be rapidly adjusted to approach the target in the beginning. But as the light source approaches the target, the advantage of movement stability of robotic arm can help it to achieve higher accuracy.

\begin{table}[!htb] \vspace{-0cm}
	\scriptsize
	\caption{\small FPSs of our ALR method under difference image sizes.}  \vspace{-0.5cm} \label{table:speed}
	\begin{center}
		\begin{tabular}{|c||c|c|c|c|c|} \hline
			Size (pixel)  & 96$\times$64 & 192$\times$128  & 288$\times$192 & 384$\times$256 & 480$\times$320 \\
			\hline		&&&&&\vspace{-0.22cm} \\ 
			FPS & 33.0961 & 27.2844 & 18.2382 & 13.7874 & 9.7125 \\ 
			\hline 
			\hline
			Size (pixel)  & 576$\times$384 & 672$\times$448 & 768$\times$512 & 864$\times$576 & 960$\times$640\\
			\hline		&&&&&\vspace{-0.22cm} \\ 
			FPS  &	7.5620 & 5.8763 & 4.6747 & 3.7234 & 3.0444\\
			\hline
		\end{tabular}
	\end{center}
\end{table}

\subsection{ALR under multiple light sources}
Up to now, what we have discussed is the ALR under single light source. In fact, using multiple light sources can effectively achieve higher imaging quality and capture more rich scene microstructures than single light source. Except for fine-grained change monitoring and measurement, ALR under multiple light sources is also useful in many other fields, e.g., lighting re-setup for photography, which usually needs to reproduce a specific light source combination to express some character emotion or photography style~\cite{begleiter201450}. 
Fortunately, without any auxiliary, our ALR method can be directly applied to multiple light sources condition. Specifically, in the reference observation, we successively lighten each light source and capture the corresponding image. Then, during current observation, we also successively conduct ALR for each light source, using the corresponding image as the reference. Note, for each ALR, the light sources which have been relocated are regarded as the environment lighting. To verify the effectiveness, we conduct ALR under two near point light sources by two robotic arms for scene S3--5. The results are shown in Fig.~\ref{fig:ALRtwoLight}. For each case, the first row indicates the two reference images, the second row denotes the ALR results. The recurred image and the 10-times magnified absolute differences are also shown in Fig.~\ref{fig:ALRtwoLight}. We can find the ALR results faithfully reproduce the scene surface details of the reference one.

\begin{figure}[!htb]
	\begin{center}
		\includegraphics[width=1\linewidth]{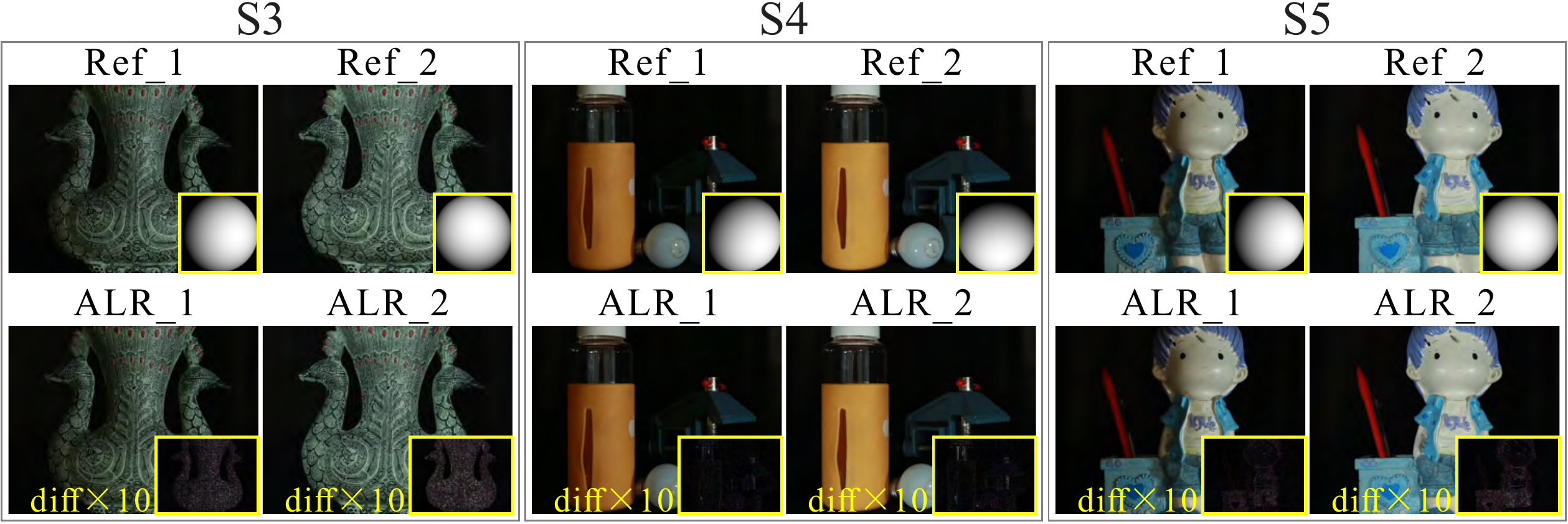} \vspace{-0.8cm}
	\end{center}
	\caption{\small {ALR results under two near point light sources.}} \vspace{-0cm}
	\label{fig:ALRtwoLight} \vspace{-0cm}
\end{figure}

\section{Real-World Applications}

\begin{figure*}[!htb]
	\begin{center}
		\includegraphics[width=1.0\linewidth]{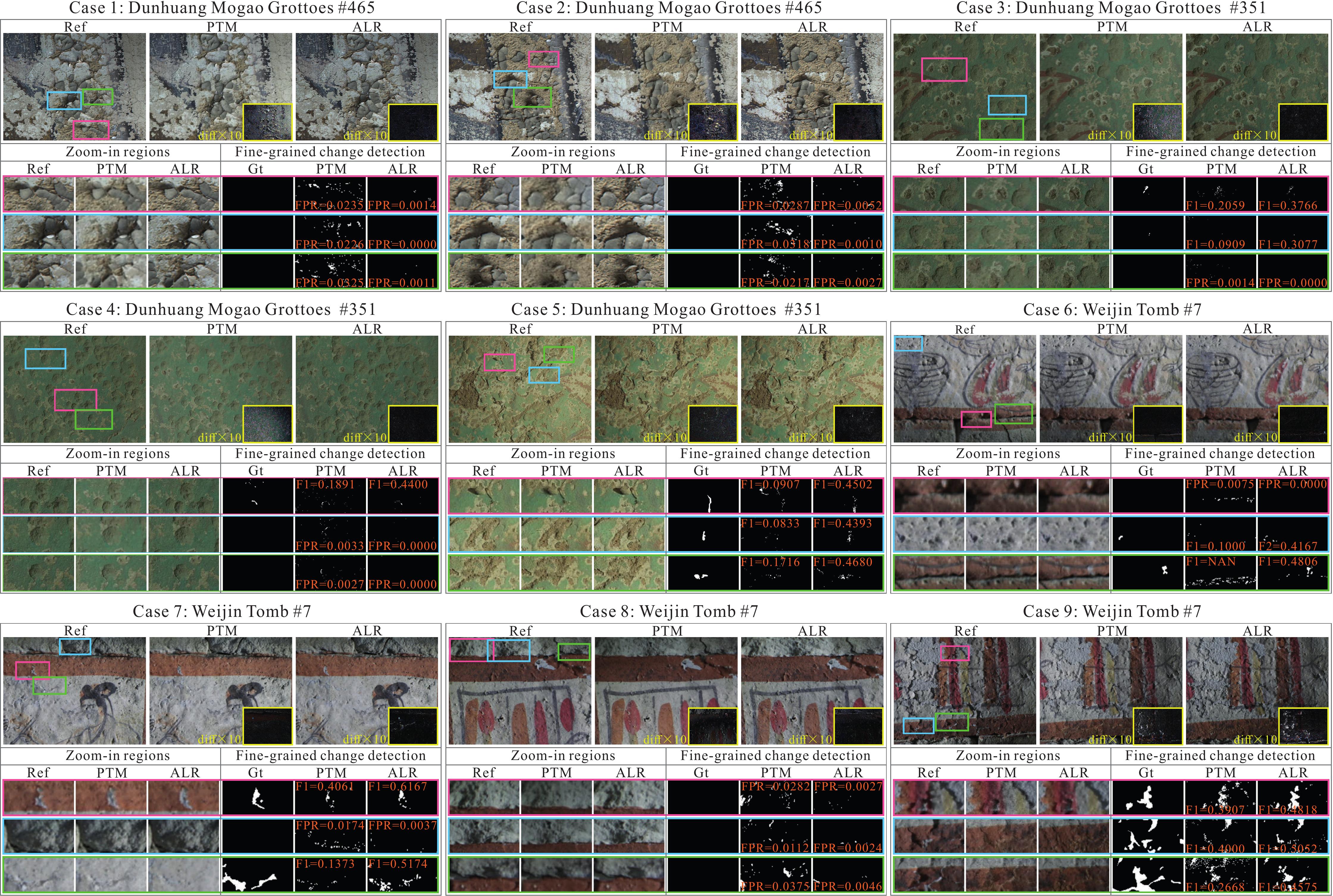} \vspace{-0.6cm}
	\end{center}
	\caption{\small {Some results of ALR and PTM methods on fine-grained change detection of ancient murals in Dunhuang Mogao Grottoes \#465 (Case 1--2), \#351 (Case 3--5) and Weijin Tomb \#7 (Case 6--9). We use 20 images under different lighting conditions to conduct PTM method. For each case, the zoom-in regions and corresponding change detection results are shown in the bottom, including the FPR or F1-Measure values.}}
	\label{fig:changedetection}\vspace{-0cm}
\end{figure*}

One promising application of the proposed ALR method is accurately capturing fine-grained change of high-value objects. For instance, in cultural heritage conservation, an essential problem is to detect and measure the minute changes of cultural relics from two sets of observations within long time intervals. This is an open real-world problem since cultural relics (e.g., Dunhuang Mogao murals) suffer from various of deteriorations even though they have been seriously protected. However, this is a quite tough problem, and challenges particular from misaligned image position as well as misaligned lighting condition. In addition, some changes can only be clearly observed in specific lighting condition so that ALR is of great importance in fine-grained change detection problem.

To address this problem, we combine our ALR method with the high-accurate actively camera relocalization method (ACR)~\cite{our2018ACR} and a state-of-the-art fine-grained change detection method (FGCD)~\cite{fcd:feng2015fine}. We apply our ALR to actively capture and measure the fine-grained changes of ancient murals in two World Cultural Heritage Sites, Dunhuang Mogao Grottoes (Case 1--5) and Weijin Graves (Case 6--9) in Fig.~\ref{fig:changedetection}. Specifically, given the reference image, we first relocalize current camera pose via ACR, then we do ALR to physically reproduce the lighting condition of reference image and take current images for evaluation. The time interval between twice observations is one year for Case 1--2 and Case 6--9, and 12 days for Case 3--5. Besides, we also capture 20 images under different lighting conditions in current observation and carry on PTM~\cite{RTI:PTM01} to generate relighting images for comparison. Fine-grained changes are detected by FGCD algorithm for both ALR and PTM results. 

All the results are shown in Fig.~\ref{fig:changedetection}. Refer to the zoom-in regions and corresponding FGCD results, we can clearly see that our ALR can generate much higher F1-Measure and lower FPR errors. This is because that some surface details cannot be faithfully reproduced by PTM, e.g., the cast shadow region of Case 2 in Fig.~\ref{fig:changedetection}. Note, for some scenes, e.g., Case 1, there is no change between twice observations, so we show the FPR value instead of F1-Measure. In a word, our ALR supports much more accurate fine-grained change detection.

\section{Conclusion}

In this paper, we have studied a new problem, active lighting recurrence (ALR), that aims to actively reproduce the lighting condition of a single reference image. To achieve instant and accurate ALR guidance, we propose a simple yet effective analogy parallel lighting (apl) based ALR approach. We show that the proposed approach works well for the commonly-used realistic near point light source and small near surface light source, with strict theoretical equivalence and convergence guarantees. Besides, we also prove the invariance of our approach to the ambiguity of normal and lighting decomposition. To the best of our knowledge, this is the first solid ALR study in computer vision and robotics. 

ALR plays a crucial role in real-world fine-grained change detection (FGCD) tasks of cultural heritages. Different with existing synthetic lighting recurrence (SLR), our ALR guarantees the physical correctness of the recurrence image and can be conducted in real-time. Besides, our method supports both manual operation and robotic platform, i.e., has strong environmental adaptability. So far, our approach has been successfully applied to a number of real-world FGCD tasks, e.g., for the first time, we discover the minute change in less than 2 weeks time interval in Dunhuang Mogao Grottoes.

In this work, we propose the new ALR problem and achieve an effective ALR method. Our work provides a useful and novel research topic in active robotic vision, which is the significance of this paper. In fact, our work is just a beginning of ALR problem. In the future, we plan to further study ALR with different light source types and under varied environment lightings during twice observations.

%
%

\ifCLASSOPTIONcaptionsoff
  \newpage
\fi


\bibliographystyle{IEEEtran}
\bibliography{IEEEabrv,ALR_refs}
\end{document}